\documentclass{article}

\usepackage{PRIMEarxiv}

\usepackage[utf8]{inputenc} % allow utf-8 input
\usepackage[T1]{fontenc}    % use 8-bit T1 fonts
\usepackage{hyperref}       % hyperlinks
\usepackage{url}            % simple URL typesetting
\usepackage{booktabs}       % professional-quality tables
\usepackage{amsfonts}       % blackboard math symbols
\usepackage{nicefrac}       % compact symbols for 1/2, etc.
\usepackage{microtype}      % microtypography
\usepackage{lipsum}
\usepackage{graphicx}
\graphicspath{{media/}}     % organize your images and other figures under media/ folder
\usepackage{amsmath,amsfonts,bm}

\def\eqref#1{equation~\ref{#1}}
\def\1{\bm{1}}

\DeclareMathAlphabet{\mathsfit}{\encodingdefault}{\sfdefault}{m}{sl}
\SetMathAlphabet{\mathsfit}{bold}{\encodingdefault}{\sfdefault}{bx}{n}

\newcommand{\R}{\mathbb{R}}

\usepackage{hyperref}
\usepackage[capitalize,noabbrev]{cleveref}

\usepackage[toc,page]{appendix}

\usepackage{hyperref}
\usepackage{url}
\usepackage[english]{babel}
\usepackage{amsmath,amsthm}
\usepackage{enumitem}

\usepackage{subcaption}
\usepackage{graphicx}

\theoremstyle{definition}

\theoremstyle{plain}
\newtheorem{theorem}{Theorem}
\newtheorem{lemma}{Lemma}
\newtheorem{proposition}{Proposition}

\newcommand{\TDS}{\mathrm{TDS}_{n \times n}}
\newcommand{\DS}{\mathrm{DS}_{n \times n}}
\newcommand{\lra}{\longrightarrow}
\newcommand{\al}{\alpha}
\newcommand{\be}{\beta}

\usepackage[normalem]{ulem} % to use nix
\usepackage{soul}

\newcommand{\beq}{\begin{equation}}
\newcommand{\eeq}{\end{equation}}

\title{Sinkhorn doubly stochastic attention \\rank decay analysis}

\author{
  Michela Lapenna \\
  Department of Physics and Astronomy, \\
  University of Bologna,  \\
  Bologna, Italy\\
  \texttt{michela.lapenna4@unibo.it} \\
  \And
  Rita Fioresi \\
  Department of Pharmacy and Biotechnologies,\\
  University of Bologna \\
  Bologna, Italy\\
  \texttt{rita.fioresi@unibo.it} \\
   \And
  Bahman Gharesifard \\
  Department of Mathematics and Statistics,\\
  Queen’s University, \\
  Kingston, ON, Canada \\
  \texttt{bahman.gharesifard@queensu.ca} \\
}

\begin{document}
\maketitle

\begin{abstract}
The self-attention mechanism is central to the success of Transformer architectures. However, standard row-stochastic attention has been shown to suffer from significant signal degradation across layers. In particular, it can induce rank collapse, resulting in increasingly uniform token representations, as well as entropy collapse, characterized by highly concentrated attention distributions. Recent work has highlighted the benefits of doubly stochastic attention as a form of entropy regularization, promoting a more balanced attention distribution and leading to improved empirical performance. In this paper, we study rank collapse across network depth and show that doubly stochastic attention matrices normalized with Sinkhorn algorithm preserve rank more effectively than standard Softmax row-stochastic ones. As previously shown for Softmax, skip connections are crucial to mitigate rank collapse. We empirically validate this phenomenon on both sentiment analysis and image classification tasks. Moreover, we derive a theoretical bound for the pure self-attention rank decay when using Sinkhorn normalization and find that rank decays to one doubly exponentially with depth, a phenomenon that has already been shown for Softmax.
\end{abstract}

% keywords can be removed
\keywords{Transformers, Self-attention mechanism, Sinkhorn algorithm, Optimal transport}

\section{Introduction}

Transformers are the state-of-the-art architecture for large language models and have proven successful across a wide range of domains, going from natural language processing~\cite{tunstall2022nlp} to computer vision~\cite{dosovitskiy2021vit}. Since their introduction in~\cite{vaswani2017attention}, the self-attention mechanism, originally inspired by~\cite{Bahdanau}, has been the subject of extensive theoretical and empirical investigation. In standard implementations, the attention matrix is row-stochastic: each row sums to one and can therefore be interpreted as a discrete probability distribution describing how strongly a given token attends to all other tokens in the sequence. This enables the model to learn pairwise token interactions, with the objective of extracting meaningful representations that capture the underlying structure of the input data~\cite{khan}.

Standard row-stochastic self-attention exhibits two major {shortcomings} in signal propagation across the {layers} of the Transformer {architecture} {\cite{giorlandino2026}. {The first shortcoming concerns the rank collapse as depth increases. Indeed,}~\cite{wang2020linformer} observe that self-attention matrices are inherently low rank, with the effective rank decreasing {as layer depth increases}. Building on this observation,~\cite{dong2021attention} show that in a pure self-attention network, with feed-forward layers and skip connections disabled, the entire network output converges to a rank-one matrix doubly exponentially with depth. This rank collapse progressively destroys the input sequence information, by reducing it to uniform token representations. Notably, skip connections are shown to play a key role in mitigating rank collapse, as further observed in~\cite{Noci2022SignalPI,Wang2022DeepNetST}. {The second shortcoming is related to} entropy collapse.
\cite{entropy_collapse} demonstrate that the Shannon entropy of the attention distribution decreases during training, resulting in highly concentrated attention scores in which each query attends to only a small subset of tokens. This entropy collapse results in suboptimal information flow and, more importantly, leads to high training instability. A common strategy to mitigate this issue is to increase the softmax temperature, which smooths the attention distribution and increases its entropy~\cite{Agarwala2020TemperatureCT,xuan2025}.

When examining the evolution of self-attention during training,~\cite{sinkformer} notice that the attention matrix tends to converge to a doubly stochastic matrix as the number of epochs increases, despite being row-normalized via Softmax by default. Motivated by this behaviour, they constrain the attention matrix to be doubly stochastic since the beginning of the training, by replacing Softmax with Sinkhorn algorithm~\cite{Sinkhorn1964,Sinkhorn1967ConcerningNM}. The resulting architecture is referred to as Sinkformer. This can improve the performance. Following~\cite{sinkformer}, a growing body of work has replaced vanilla row-stochastic self-attention with doubly stochastic normalization, consistently reporting its performance advantage, see for instance~\cite{multipromptsinkhornattention, espformer, lotformer, quantumds}. Intuitively, doubly stochastic attention has a similar effect to increasing the entropy of the attention distribution, so that less interactions are missed and tokens attend more uniformly to one another~\cite{quantumds}, see \cref{fig:entropy_sink}.

\begin{figure}[htb]
    \centering
    \includegraphics[width=0.6\textwidth]{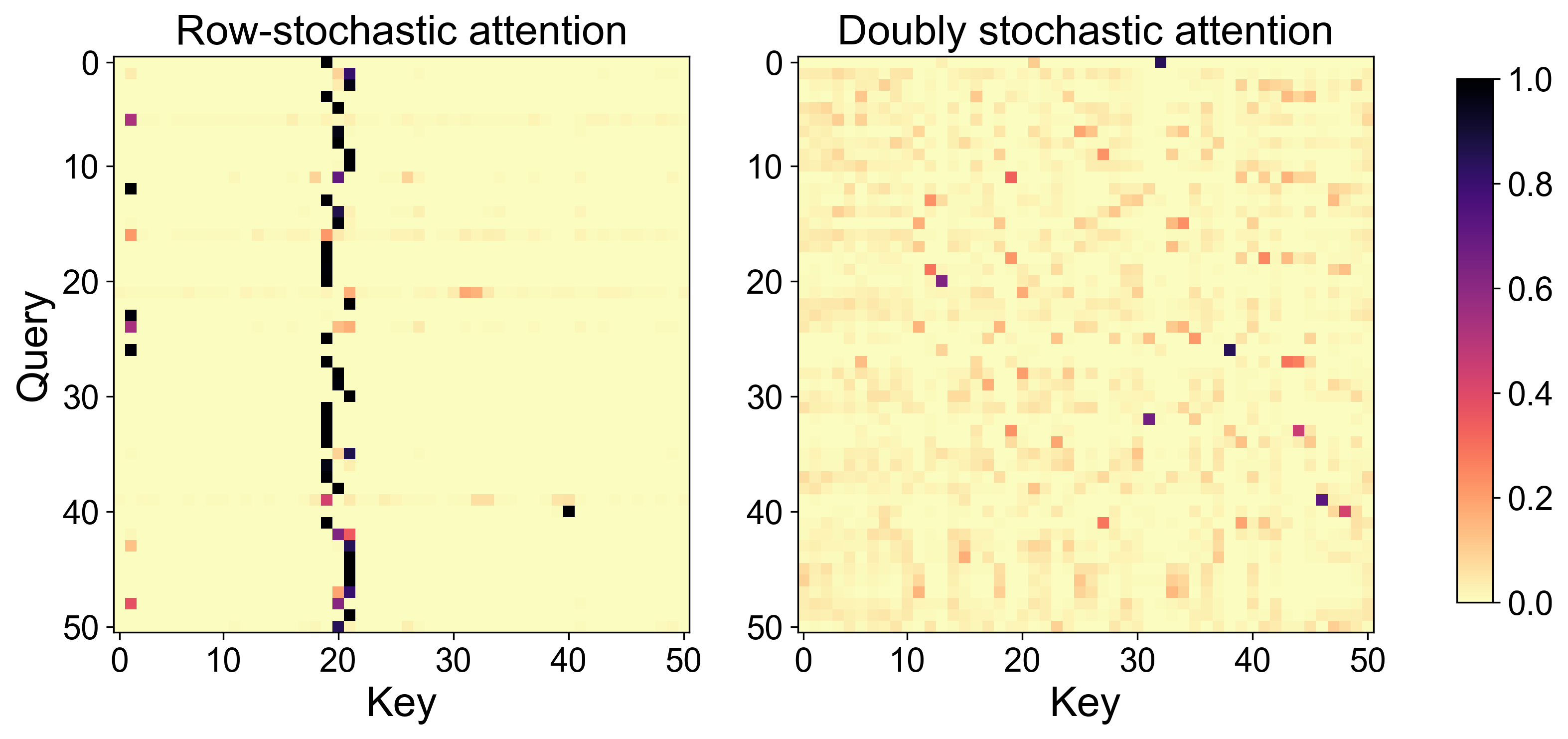}
    \caption{Attention matrices from the first layer and a single attention head of a Vision Transformer \cite{dosovitskiy2021vit} trained on MNIST \cite{mnist}, for one sampled input image. See \cref{expe-details} for details on the experimental setup. Row-stochastic attention (Softmax) concentrates on a few key tokens, while doubly stochastic attention (Sinkhorn) distributes attention more uniformly across tokens. Both matrices are visualized on a shared color scale for direct comparison.}
    \label{fig:entropy_sink}
\end{figure}

In~\cite{sinkformer}, a connection between self-attention and optimal transport is also established. The Transformer is viewed as a model acting on discrete probability distributions and the infinite depth limit of a sequence of attention blocks is analyzed, the output given by solving a neural ODE~\cite{neuralode}. Under a symmetry assumption for the query and key matrices, Sinkhorn normalization enables the iteration of self-attention layers with skip connections to be interpreted as a Wasserstein gradient flow for an energy minimization, while this does not apply to Softmax. Indeed, doubly stochastic attention can be viewed as defining a transport plan between queries and keys, see~\cite{espformer}, with Sinkhorn algorithm solving the entropy-regularized optimal transport problem with an improved complexity~\cite{cuturi2013sinkhorn} compared to a linear program~\cite{10.1561/2200000073}.

Despite being a natural choice, several works have pointed out limitations of using Sinkhorn algorithm to enforce doubly stochasticity in self-attention. First, its iterative nature can be computationally expensive \cite{espformer}, and the optimal number of normalization iterations is typically determined empirically rather than theoretically \cite{quantumds}. Additionally, poor initialization can significantly deteriorate performance \cite{rethinking}. To address these issues, alternative approaches have been proposed. Focusing on computational efficiency, \cite{espformer} replace iterative Sinkhorn normalization with a mechanism based on sliced optimal transport \cite{Rabin2011WassersteinBA,Kolouri2015SlicedWK,Kolouri2019GeneralizedSW}, specifically leveraging expected sliced transport plans as introduced in \cite{Liu2024ExpectedST}. Moreover, \cite{quantumds} exploit a direct link between doubly stochastic matrices and unitary operators to propose an hybrid quantum-classical doubly stochastic Transformer, which replaces the non-parametric Sinkhorn algorithm with the variational quantum circuit in \cite{10.5555/3692070.3693486}.

In this paper, we study the rank collapse of self-attention along the Transformer depth, and we compare the row-stochastic with the doubly stochastic case. To the best of our knowledge, this is the first study of this comparison. Since our focus is not on empirical performance but on structural properties, and given its widespread use in practice, we enforce doubly stochasticity using Sinkhorn algorithm. Both in our theoretical analysis and empirical evaluation, we build on the path decomposition framework of \cite{dong2021attention}, where a multi-head self-attention network is decomposed into a linear combination of \textit{paths}, each path corresponding to a sequence of single-heads across layers, one head per layer (see \cref{prelim} for more details).
Our main contributions are as follows: 

\begin{enumerate}[leftmargin=*]
\item We derive a norm bound on the distance between the output of a pure self-attention network and the limit rank-one matrix of uniform representations, when enforcing doubly stochasticity with Sinkhorn algorithm. As shown for Softmax in \cite{dong2021attention}, rank collapses doubly exponentially with depth.

\item We experimentally measure rank decay in a Transformer architecture trained from scratch and show that enforcing doubly stochasticity significantly delays the collapse compared to the row-stochastic case, see \cref{fig:rank_double}. Adding skip connections increases the rank and makes the difference more visible.
\end{enumerate}

\begin{figure}[htb]
    \centering
    \includegraphics[width=0.4\textwidth]{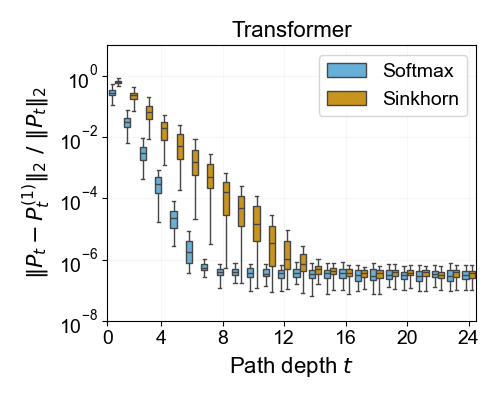}
    \caption{Rank collapse for the product of attention matrices in a {path} $P_t$ as depth $t$ increases. Results obtained by training a Transformer on AG’s news dataset \cite{ag_news}, see \cref{chained_att}.}
    \label{fig:rank_double}
\end{figure}

\section{Preliminaries}
\label{prelim}
{We introduce the fundamental components of the self-attention mechanism, with particular focus on the doubly stochastic case, and review the Sinkhorn algorithm and its associated operator~\cite{Sinkhorn1964,Sinkhorn1967ConcerningNM}, deferring the details of its iterative normalization procedure to \cref{apx:app-sinkhorn}.} We then recall the {path decomposition} argument proposed in \cite{dong2021attention}, and the concept of {residual} that will be used both to obtain the norm bound in \cref{theory} and to measure rank decay in \cref{experiments}.

{We start by briefly recalling the standard formulation of a self-attention head in Transformers \cite{vaswani2017attention}. Consider an input matrix $X \in \mathbb{R}^{n \times d}$, where $n$ is the number of tokens in the sequence and~$d$ the embedding dimension. Each token is associated with three projected representations: a \textit{query}, a \textit{key}, and a \textit{value}. The attention matrix $P \in \mathbb{R}^{n \times n}$ is computed by measuring the similarity between queries and keys, producing weights that determine how strongly each token attends to the others. These weights are then used to compute a weighted aggregation of the value vectors, yielding the updated representation of each token.}

In the classical setting of row-stochastic self-attention~\cite{vaswani2017attention}, the mapping $P$ is the Softmax operator applied row-wise, thus normalizing each row to sum to one. {In our case,} $P$ is the doubly stochastic attention matrix obtained via the Sinkhorn operator, which iteratively normalizes both rows and columns to sum to one:
\begin{equation}\label{equ:sink_att}
P = \mathrm{Sinkhorn}\!\left(\frac{(X W_{Q,h} +
\mathbf{1} b_{Q,h}^\top) (X W_{K,h} + \mathbf{1}
b_{K,h}^\top)^\top}{\sqrt{d_{qk}}}\right),
\end{equation}
with $W_{Q,h}, W_{K,h} \in \mathbb{R}^{d \times d_{qk}}$ query and key weight matrices, $b_{Q,h}^\top, b_{K,h}^\top\in \mathbb{R}^{d_{qk}}$ bias row vectors, {and $\sqrt{d_{qk}}$ the scaling factor normalizing the score magnitudes}.

{In~\ref{equ:sink_att}, $\mathrm{Sinkhorn}$ is obtained via a converging iterative algorithm \cite{Sinkhorn1964,Sinkhorn1967ConcerningNM}}:
$$
\mathrm{Sinkhorn}: \R^{n \times n} \lra \DS
$$
where $\DS$ denotes the space of doubly stochastic matrices, that is:
$$
\mathrm{DS}_{n \times n}:= \{ S \in \mathbb{R}^{n\times n} \mid
S\mathbf{1} = \mathbf{1},\;
\mathbf{1}^\top S = \mathbf{1}^\top \},
$$
with $\mathbf{1} \in \mathbb{R}^n$ denoting a column vector with all entries equal to 1. In \cref{apx:app-sinkhorn}, we provide a description of the Sinkhorn algorithm and of its log domain implementation.

We define the \textit{self-attention head operator} $\mathrm{SA}_h$ acting on the input $X$ as follows:
\beq\label{sah}
\mathrm{SA}_h(X) := P X W_{V,h} + \mathbf{1} b_{V,h}^\top,
\eeq
where $W_{V,h} \in \mathbb{R}^{d \times d_v}$ is the value weight matrix{ and $b_{V,h}^\top\in \mathbb{R}^{d_v}$ is the bias row vector}. 

{We next follow the path decomposition framework in \cite{dong2021attention} to write the general expression for a multi-head multi-layer network.}
We define the \textit{self-attention network} $\mathrm{SAN}$ as a sequence of $L$ multi-head self-attention layers{, each with $H$ heads.} The output of each layer is obtained by concatenating the outputs of all its heads (along the last dimension) and linearly projecting them onto a subspace of appropriate size through a weight matrix $W_{O,h}$. {The $\mathrm{SAN}$ output takes the form:}
\begin{equation}\label{equ:SAN}
\mathrm{SAN}(X):=\sum_{h_1,\ldots,h_L \in [H]^L}\left(P^{L}_{h_L} \cdots P^{1}_{h_1}\right)X\left(W^{1}_{h_1} \cdots W^{L}_{h_L}\right)
\end{equation}
where $[H] = (1, \ldots, H)$, {$W^{l}_h = W^{l}_{V,h} {W_{O,h}^{l}}^\top$ with $\ell \in \{1,\dots,L\}$}, and  $L$ the number of layers. {In here, without loss of generality, we have discarded the bias term.}

In \cite{dong2021attention}, \ref{equ:SAN} is called a \textit{path decomposition} for the self-attention network, the $\mathrm{SAN}$ being decomposed into a linear combination of \textit{paths}. Each path corresponds to selecting a single attention head at each layer (or bypassing the layer via a skip connection), thereby forming an effective single-head network across layers.

From classical results on products of row-stochastic~\cite{wolfowitz,Anthonisse1977ExponentialCO} and doubly stochastic matrices~\cite{Schwarz1980}, it is known that, under sufficiently strong mixing (ergodic) conditions, such products converge to a rank-one matrix as the number of factors increases. {In particular, products of {doubly} stochastic matrices converge to~$\frac{1}{n}\mathbf{1}\mathbf{1}^\top$ (see~\cite{Schwarz1980}).} Thus, we want to estimate how fast the product of doubly stochastic matrices $(P^{L}_{h_L} \cdots P^{1}_{h_1})$ in \ref{equ:SAN} contributes to reducing the rank of the output matrix $\mathrm{SAN}(X)$ {as depth increases}. To quantify this effect, we recall the notion of residual~\cite{dong2021attention}.

We define the \textit{residual} matrix $\mathrm{res}(X)$ of the input $X \in \R^{n \times d}$ as:
\begin{equation}\label{equ:residual}
\mathrm{res}(X) := X - \mathbf{1} x^\top,
\qquad
x := \frac{1}{n} X^\top \mathbf{1} \in \mathbb{R}^d
\end{equation}
Here, $\mathbf{1}x^\top = \frac{1}{n}\mathbf{1}\mathbf{1}^\top X$ is the matrix obtained by uniform averaging across rows, and we recognize in $\frac{1}{n}\mathbf{1}\mathbf{1}^\top$ the rank-one limit of products of doubly stochastic matrices.

Analogously, for the output $\mathrm{SAN}(X)$, we define:
\begin{equation}\label{equ:residual_san}
\mathrm{res}(\mathrm{SAN}(X)) := \mathrm{SAN}(X) - \mathbf{1} x_{\mathrm{SAN}(X)}^\top,
\quad
x_{\mathrm{SAN}(X)} := \frac{1}{n} \mathrm{SAN}(X)^\top \mathbf{1}
\end{equation}

We use the residual definition consistently across layers to relate $\mathrm{res}(\mathrm{SAN}(X))$ to $\mathrm{res}(X)$ and track how intermediate representations evolve toward the rank-one limit.

{With these preliminaries in mind, our goal is to study how products of attention matrices affect the rank of the representations produced by a self-attention network. In particular, we address the following questions:}

\begin{enumerate}[leftmargin=*]
\item {How does the residual norm $\|\mathrm{res}(\mathrm{SAN}(X))\|_2$ decay as the number of layers increases when the self-attention matrices are normalized with Sinkhorn, and are therefore doubly stochastic? Equivalently, how quickly does the output of a self-attention network $\mathrm{SAN}(X)$ approach a rank-one matrix under successive products of such attention matrices?}
\item {How does the implementation of attention normalization via the Sinkhorn algorithm, producing doubly stochastic attention matrices, affect the rank decay of products of attention matrices, and consequently of the self-attention output, compared to the standard row-stochastic Softmax normalization used in Transformers?}
\end{enumerate}

In \cref{theory}, we derive a theoretical bound on the residual norm $\|\mathrm{res}(\mathrm{SAN}(X))\|_2$ in terms of $\|\mathrm{res}(X)\|_2$ when self-attention is normalized via Sinkhorn, yielding doubly stochastic matrices. We stress, however, that the expression of $\mathrm{SAN}(X)$ in \ref{equ:SAN} corresponds to a pure self-attention network. A Transformer architecture is instead characterized by blocks composed of a self-attention layer followed by a feed-forward layer, with skip connections and layer normalizations in between. We will take into account the full Transformer architecture in our experiments in~\cref{experiments}. {In particular, in \cref{chained_att} we measure the rank decay of the product of attention matrices $\left(P^{L}_{h_L} \cdots P^{1}_{h_1}\right)$ in \ref{equ:SAN}, while in \cref{san_residual} the one of the self-attention output $\mathrm{SAN}(X)$.}

\section{Main {theoretical} results}
\label{theory}

We report here our main {theoretical} results, namely the {norm bound for the} rank decay {of pure self-attention, without skip connections and feed-forward layers.} {We measure rank decay through the {residual} matrix and provide the }formulas for a single-head single-layer {$\mathrm{SA}_h(X)$} in \cref{th:residual}, and a multi-head multi-layer {$\mathrm{SAN}(X)$} in \cref{thm:residual_multiheadlayer}. We briefly outline details about the proof in \cref{proof_sketch}, referring the reader to the appendices for the full details.

We first state the result for a single-head single-layer {$\mathrm{SA}_h(X)$}, whose proof is provided in \cref{app-main}.

\begin{theorem}[Single-head, single-layer]
\label{th:residual}
For a single-head single-layer $\mathrm{SA}_h(X)$, the residual satisfies:

\[
\boxed{
\|\mathrm{res}(\mathrm{SA}{_h}(X))\|_2
\le
\frac{\lambda\, \beta}{\sqrt{n^{3}d_{qk}}}
\|\mathrm{res}(X)\|_2^3 
}
\]
where $0 < \lambda \leq 1$, $n$ is the sequence length, $d_{qk}$ is the query and key embedding dimension, and $\|W_Q W_K^\top\|_2\|W_V\|_2 \le \beta$ for some $\beta > 0$.
\end{theorem}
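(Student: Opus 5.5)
The plan is to mimic the argument of \cite{dong2021attention} for Softmax, exploiting an invariance of the Sinkhorn operator that is stronger than the one available for Softmax. Write $X = \mathbf{1}x^\top + R$ with $R = \mathrm{res}(X)$, so that $\mathbf{1}^\top R = 0$. Expanding the score matrix in \ref{equ:sink_att} gives
\[
A = \frac{(\mathbf{1}x^\top + R)W_Q W_K^\top(\mathbf{1}x^\top + R)^\top}{\sqrt{d_{qk}}}.
\]
Every term except $E := R W_Q W_K^\top R^\top/\sqrt{d_{qk}}$ has the form $\mathbf{1}a^\top$ or $b\mathbf{1}^\top$; the bias terms have the same form and are handled identically.

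The first key step is the invariance $\mathrm{Sinkhorn}(A + \mathbf{1}a^\top + b\mathbf{1}^\top) = \mathrm{Sinkhorn}(A)$. It holds because the Sinkhorn limit is the unique doubly stochastic matrix of the form $D_1 e^{A} D_2$ with positive diagonal $D_1, D_2$, and additive row and column shifts only rescale $D_1$ and $D_2$. Softmax is invariant only under row shifts, whereas Sinkhorn also removes the column shifts, so $P = \mathrm{Sinkhorn}(E)$ exactly.

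The second step rewrites the residual of the head. We have $\mathrm{SA}_h(X) = P\mathbf{1}x^\top W_V + P R W_V = \mathbf{1}x^\top W_V + P R W_V$, and $\mathrm{res}(Y) = (I - \tfrac1n\mathbf{1}\mathbf{1}^\top)Y$. Two facts then apply: $\mathbf{1}^\top P = \mathbf{1}^\top$ (double stochasticity) and $\mathbf{1}^\top R = 0$. Together they give
\[
\mathrm{res}(\mathrm{SA}_h(X)) = \bigl(P - \tfrac1n\mathbf{1}\mathbf{1}^\top\bigr) R\, W_V ,
\]
and hence
\[
\|\mathrm{res}(\mathrm{SA}_h(X))\|_2 \le \bigl\|P - \tfrac1n \mathbf{1}\mathbf{1}^\top\bigr\|_2\, \|R\|_2\, \|W_V\|_2 .
\]

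Everything now reduces to a perturbation estimate. Since $\mathrm{Sinkhorn}(0) = \tfrac1n\mathbf{1}\mathbf{1}^\top$, I would aim for a bound of the form
\[
\bigl\|\mathrm{Sinkhorn}(E) - \tfrac1n\mathbf{1}\mathbf{1}^\top\bigr\|_2 \le \frac{\lambda}{n^{3/2}}\,\|E\|_2, \qquad 0<\lambda\le 1 .
\]
Combined with $\|E\|_2 \le \|R\|_2^2\,\|W_QW_K^\top\|_2/\sqrt{d_{qk}}$ and $\|W_QW_K^\top\|_2\|W_V\|_2\le\beta$, this yields exactly the boxed cubic bound.

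The Lipschitz estimate above is the main obstacle. I would first differentiate the fixed-point equations $D_1 e^{E} D_2 \mathbf{1} = \mathbf{1}$ and $\mathbf{1}^\top D_1 e^{E} D_2 = \mathbf{1}^\top$ along the segment $tE$, $t\in[0,1]$. The derivative of $\mathrm{Sinkhorn}(tE)$ is a doubly-centered, $P$-weighted version of $E$, namely $P\odot E$ minus row and column corrections. This can be bounded entrywise by $\max_{ij}P_{ij}$ times a centered copy of $E$. Integrating over $t$ gives a mean-value estimate; the $1/n$ factors come from the entries of $P$ being of order $1/n$, and the extra $1/\sqrt n$ from the conversion between entrywise and spectral norms. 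The problem is that $\max_{ij} P_{ij}$ could be close to $1$ for large $E$, which would spoil the $1/n$ scaling. Failing a clean uniform constant, I would treat $\lambda$ as the contraction factor, defined as the ratio of the actual deviation to the linearized one. I would then argue $\lambda\le 1$ from the fact that Sinkhorn, as a projection in the Hilbert projective metric (Birkhoff contraction), cannot amplify the linear term.
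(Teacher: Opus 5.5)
Your preliminary reductions are correct and coincide with the paper's. Row and column shifts are absorbed into $D_1,D_2$, so $P=\mathrm{Sinkhorn}(E)$. You also have $\mathrm{res}(\mathrm{SA}_h(X))=(P-\tfrac{1}{n}\mathbf{1}\mathbf{1}^\top)RW_V$ and $\|E\|_2\le\|R\|_2^2\|W_QW_K^\top\|_2/\sqrt{d_{qk}}$.

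The gap is the estimate you flagged as the main obstacle, and it cannot be closed because it is false. Since $\mathbf{1}^\top R=0$, the matrix $E$ is already doubly centered, so $Q(E)=E$. Linearizing $P_{ij}=\tfrac{1}{n}e^{tE_{ij}+u_i+v_j}$ under unit row and column sums shows that the derivative of Sinkhorn at $0$ is $Y\mapsto\tfrac{1}{n}Q(Y)$. Hence $\mathrm{Sinkhorn}(tE)-\tfrac{1}{n}\mathbf{1}\mathbf{1}^\top=\tfrac{t}{n}E+O(t^2)$, whose spectral norm $\tfrac{t}{n}\|E\|_2$ exceeds $\tfrac{t}{n^{3/2}}\|E\|_2$ by a factor $\sqrt n$. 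The entries of $P$ supply only one factor $1/n$. Passing from an entrywise bound to a spectral one can only cost a factor (up to $\sqrt n$), never gain $1/\sqrt n$.

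Concretely, for $n=2$ and $E=a\bigl(\begin{smallmatrix}1&-1\\-1&1\end{smallmatrix}\bigr)$ one has $\|\mathrm{Sinkhorn}(E)-\tfrac{1}{2}\mathbf{1}\mathbf{1}^\top\|_2=\tanh|a|$, while your bound gives $|a|/\sqrt{2}$. This propagates to the theorem itself. Take $n=2$, $d=d_{qk}=1$, $W_Q=W_K=W_V=1$ and $X=(1,0)^\top$. Then $\mathrm{res}(\mathrm{SA}_h(X))=\tanh(\tfrac{1}{4})\,\mathrm{res}(X)$, so the left-hand side is $\tfrac{\sqrt{2}}{2}\tanh\tfrac{1}{4}\approx 0.173$. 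The right-hand side is $\lambda\beta/8\le 0.125$ for $\beta=\|W_QW_K^\top\|_2\|W_V\|_2=1$ and any $\lambda\le 1$. So the boxed inequality, read with $\beta$ the actual weight product, is not provable by any route.

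Your fallback does not help either. A $\lambda$ defined as the ratio of the true deviation to $n^{-3/2}\|E\|_2$ tends to $\sqrt n$ as $E\to 0$. Even relative to the correct linear term $\tfrac{1}{n}\|E\|_2$, the ratio can exceed $1$. For $E=c(I-\tfrac{1}{n}\mathbf{1}\mathbf{1}^\top)$ the deviation is $\tfrac{e^c-1}{e^c+n-1}$ against $\tfrac{c}{n}$, a ratio near $e-1$ at $c=1$ and large $n$. Birkhoff--Hopf contraction concerns positive linear maps in the Hilbert projective metric and gives no such spectral-norm comparison.

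The paper's proof reaches $n^{-3/2}$ only through two moves:
\begin{itemize}
\item It substitutes the linearization for $P$ and then writes $\le$ where it had $\approx$. This discards exactly the remainder you were trying to control.
\item It uses the coefficient $\tfrac{1}{n^2}$ of \cref{lem:Q_projection}. That coefficient is an artifact of the proof of \cref{sink_approx}, whose column step applies a column \emph{softmax} to the already row-normalized matrix (re-exponentiating it) instead of merely rescaling columns. A genuine Sinkhorn step gives $\tfrac{1}{n}Q$.
\end{itemize}

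What your reduction does establish is the small-residual statement $\|\mathrm{res}(\mathrm{SA}_h(X))\|_2\le\frac{\beta}{n\sqrt{d_{qk}}}\|\mathrm{res}(X)\|_2^3+O(\|\mathrm{res}(X)\|_2^5)$, whose leading constant is attained in the $n=2$ example above. The only global bound that comes for free is $\|\mathrm{res}(\mathrm{SA}_h(X))\|_2\le\|W_V\|_2\|\mathrm{res}(X)\|_2$, from $\|P-\tfrac{1}{n}\mathbf{1}\mathbf{1}^\top\|_2\le 1$.
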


We then state the result for the multi-head multi-layer $\mathrm{SAN}(X)$}, which relies mainly on \cref{th:residual}. Its proof is reported in \cref{app-main} {together with the single-head multi-layer and the multi-head single-layer cases}.

\begin{theorem}[Multi-head, multi-layer]
\label{thm:residual_multiheadlayer}

For any multi-head $\mathrm{SAN}(X)$ consisting of $H$ heads and $L$ layers, with $\|W_Q W_K^\top\|_2\|W_h\|_2 \le \beta$ for every head $h \in \{1,\dots,H\}$ and every layer $\ell \in \{1,\dots,L\}$, the residual satisfies:
\[
\boxed{
\|\mathrm{res}(\mathrm{SAN}(X))\|_2
\le
\left(
\frac{\lambda\, \beta H}{\sqrt{n^{3}d_{qk}}}
\right)^{\frac{3^L-1}{2}}
\|\mathrm{res}(X)\|_2^{3^L}
}
\]
\end{theorem}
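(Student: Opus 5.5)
The plan is to follow the strategy of \cite{dong2021attention}: treat \cref{th:residual} as the basic one-step contraction, then iterate it along depth and sum over paths using the triangle inequality. Two intermediate statements will be established first: a single-head, multi-layer bound and a multi-head, single-layer bound. The general case then follows by induction on $L$.

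\emph{Step 1 (multi-head, single layer).} Without biases, a multi-head layer has output $\sum_{h=1}^H P_h X W_h$. The residual map is linear, since $\mathrm{res}(Y) = (I - \frac1n \mathbf{1}\mathbf{1}^\top) Y$. Therefore
\[
\mathrm{res}\Big(\sum_h P_h X W_h\Big)=\sum_h \mathrm{res}(P_h X W_h).
\]
Each summand is bounded by \cref{th:residual}, with $W_V$ replaced by $W_h=W_{V,h}W_{O,h}^\top$; the hypothesis $\|W_QW_K^\top\|_2\|W_h\|_2\le\beta$ is exactly what that theorem needs. The triangle inequality then gives
\[
\|\mathrm{res}(\cdot)\|_2\le \frac{\lambda\beta H}{\sqrt{n^3 d_{qk}}}\|\mathrm{res}(X)\|_2^3 .
\]
A point to check is that the constant $\lambda\in(0,1]$ from \cref{th:residual} can be taken uniform over heads and layers, for example by taking the maximum of the per-head constants. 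The statement implicitly assumes this.

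\emph{Step 2 (induction on $L$).} Write $X^{(\ell)}$ for the output of layer $\ell$, so that $X^{(0)}=X$ and $X^{(L)}=\mathrm{SAN}(X)$. The path decomposition in \ref{equ:SAN} is just the expansion of this layerwise recursion, so it suffices to iterate Step 1. Set $c=\lambda\beta H/\sqrt{n^3d_{qk}}$ and $r_\ell=\|\mathrm{res}(X^{(\ell)})\|_2$. Step 1 gives $r_{\ell}\le c\, r_{\ell-1}^3$. The inductive hypothesis is $r_{\ell-1}\le c^{(3^{\ell-1}-1)/2} r_0^{3^{\ell-1}}$. Substituting,
\[
r_\ell\le c\cdot c^{3(3^{\ell-1}-1)/2}\, r_0^{3^\ell}=c^{(3^\ell-1)/2} r_0^{3^\ell},
\]
because $1+\tfrac{3^\ell-3}{2}=\tfrac{3^\ell-1}{2}$. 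Setting $\ell=L$ gives the boxed bound.

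\emph{Main obstacle.} The delicate point is applying \cref{th:residual} at layer $\ell$ to the input $X^{(\ell-1)}$ rather than to the original $X$. The theorem's constant must not depend on the particular input, for instance through Sinkhorn scaling factors that vary with the input scores. If $\lambda$ does depend on the input, I would first define $\lambda$ as a supremum over all inputs encountered along the network, and check that \cref{th:residual}'s proof allows this, i.e.\ that the bound $\lambda\le1$ comes from a doubly stochastic property that holds for any input. A second subtlety is that each layer's query and key weights differ, so $\|W_QW_K^\top\|_2$ must be understood per layer and head, which the hypothesis grants.
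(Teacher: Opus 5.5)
Your proposal is essentially the paper's proof. The paper proceeds the same way: it first gets the multi-head single-layer bound by summing the single-head bound over heads with the triangle inequality, then unrolls $r_\ell\le c\,r_{\ell-1}^3$ layer by layer to get the exponent $\frac{3^L-1}{2}$. Your use of the linearity of $\mathrm{res}(Y)=(I-\frac{1}{n}\mathbf{1}\mathbf{1}^\top)Y$ replaces the paper's use of $\mathbf{1}^\top P_h=\mathbf{1}^\top$ to identify $x_{\mathrm{SA}(X)}$, which is a harmless simplification. Your worry about uniformity of $\lambda$ is moot, since $Q$ is an orthogonal projection and $\lambda=1$ always works in \cref{qprop}.
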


As observed in \cite{dong2021attention} for the row-stochastic Softmax case, pure self-attention converges to a rank-one matrix doubly exponentially with depth also when the attention is normalized to be doubly stochastic via Sinkhorn. We recall that the classical results for products of stochastic matrices show convergence to a rank-one matrix at an exponential rate as the number of factors increases \cite{Anthonisse1977ExponentialCO}, rather than doubly exponential. The cubic rate of convergence arises from the fact that the attention matrix $P$ depends on the input $X$ and is then applied to it, see \cref{san_res} in \cref{proof_sketch}. Indeed, while classical results require mixing conditions on products of independent stochastic matrices, in Transformers mixing arises from the dependence of the stochastic matrices on the input and on one another, leading to a faster convergence (see also \cref{chained_random}).

With respect to the norm bound in \cite{dong2021attention}, we highlight that we employ $\ell_2$, which is a proper norm, while they employ a composition of  $\ell_1$ and  $\ell_{\infty}$ norms which does not satisfy triangle inequality. See \cref{apx:rel_norm} for the relationship between  $\ell_2$ and their $\ell_{1,\infty}$. We further note that, due to the approximation in \cref{lem:Q_projection} in \cref{proof_sketch}, the factor $\sqrt{n^{3}}$ naturally appears in the denominator of the coefficient multiplying $\mathrm{res}(X)$ in the norm bound. If, after training, the quantity $\|W_Q W_K^\top\|_2\|W_h\|_2$ remains close to its initialization scale (which is typically of order $O(1)$ under common initialization schemes), the coefficient $\frac{\lambda\, \beta H}{\sqrt{n^{3}d_{qk}}}$ is strictly smaller than $1$, without requiring additional assumptions on the self-attention matrix.

\subsection{Proof Sketch}\label{proof_sketch}

We now give some details about the proof of \cref{th:residual}, from which also the proof of \cref{thm:residual_multiheadlayer} follows. 

\paragraph{The projection operator $Q$.} First of all, let's define the linear map:
$$
Q: \R^{n\times n} \lra \TDS, \qquad Q(Y)
= Y
- \frac{1}{n} \mathbf{1}\mathbf{1}^\top Y
- \frac{1}{n} Y \mathbf{1}\mathbf{1}^\top
+ \frac{1}{n^2} \mathbf{1}\mathbf{1}^\top Y \mathbf{1}\mathbf{1}^\top,
$$
$Q$ is the orthogonal projection onto the space $\TDS${, which represents} the tangent space to the doubly stochastic matrix space:
$$
 \TDS:= \{ H \in \mathbb{R}^{n\times n} \mid
H\mathbf{1} = 0,\;
\mathbf{1}^\top H = 0 \}
$$
see \cref{app-q} for more details.

\begin{lemma}
\label{lem:Q_projection}
The orthogonal projection $Q$ represents a first-order approximation of the Sinkhorn operator at zero:
$$
\mathrm{Sinkhorn}(tY)=\frac{1}{n}\mathbf{1}\mathbf{1}^\top+\frac{t}{n^2}Q(Y)+o(t), \qquad \text{for } Y \in \mathbb{R}^{n\times n}
$$
\end{lemma}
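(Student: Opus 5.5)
The plan is to use the standard structure of the Sinkhorn operator: for any $Y$, $\mathrm{Sinkhorn}(tY)=\mathrm{diag}(a(t))\,E(t)\,\mathrm{diag}(b(t))$, where $E(t)=\exp(tY)$ is the entrywise exponential and $a(t),b(t)$ are positive scaling vectors. By Sinkhorn's theorem these are unique up to the gauge $(a,b)\mapsto(ca,b/c)$. I would first show that, once this gauge is fixed, the scalings depend smoothly on $t$ near $t=0$. I would then expand to first order and identify the linear term using the characterization of $Q$ as the orthogonal projection onto $\TDS$.

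\textbf{Step 1 (smoothness).} Work in the log domain, as in \cref{apx:app-sinkhorn}. Write $a=e^{u}$ and $b=e^{v}$, and fix the gauge by imposing $\mathbf{1}^\top u=\mathbf{1}^\top v$. The doubly stochastic constraints become $F(u,v,t)=0$, with $F$ smooth. At $t=0$ the solution is $u=v=-\tfrac12\log n\,\mathbf{1}$, since $\tfrac1n\mathbf{1}\mathbf{1}^\top$ is doubly stochastic. The Jacobian of $F$ in $(u,v)$ at this point is, up to the factor $\tfrac1n$, the block matrix $\begin{pmatrix} nI & \mathbf{1}\mathbf{1}^\top\\ \mathbf{1}\mathbf{1}^\top & nI\end{pmatrix}$. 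Its kernel is exactly the gauge direction $(\mathbf{1},-\mathbf{1})$, and the range has codimension one, matching the single redundant constraint $\mathbf{1}^\top P\mathbf{1}=n$. After quotienting out the gauge and this redundancy, the implicit function theorem gives smooth $u(t),v(t)$.

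\textbf{Step 2 (first-order expansion).} Write $u(t)=u_0+t\alpha+o(t)$ and $v(t)=v_0+t\beta+o(t)$. Using $E(t)=\mathbf{1}\mathbf{1}^\top+tY+o(t)$, we obtain
\[
\mathrm{Sinkhorn}(tY)=\tfrac1n\mathbf{1}\mathbf{1}^\top+\tfrac tn\bigl(Y+\alpha\mathbf{1}^\top+\mathbf{1}\beta^\top\bigr)+o(t).
\]
Define $H:=Y+\alpha\mathbf{1}^\top+\mathbf{1}\beta^\top$. Since every $\mathrm{Sinkhorn}(tY)$ lies in $\DS$, differentiating the constraints at $t=0$ gives $H\mathbf{1}=0$ and $\mathbf{1}^\top H=0$, so $H\in\TDS$.

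\textbf{Step 3 (identification with $Q$).} Matrices of the form $\alpha\mathbf{1}^\top+\mathbf{1}\beta^\top$ are orthogonal to $\TDS$ in the Frobenius inner product: for $K\in\TDS$ we have $\langle \alpha\mathbf{1}^\top,K\rangle=\alpha^\top K\mathbf{1}=0$, and symmetrically for $\mathbf{1}\beta^\top$. Directly from the formula, $Q$ annihilates such matrices, and $Q$ is the identity on $\TDS$ (\cref{app-q}). Hence $H=Q(H)=Q(Y)$, which yields $\mathrm{Sinkhorn}(tY)=\tfrac1n\mathbf{1}\mathbf{1}^\top+c\,t\,Q(Y)+o(t)$.

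\textbf{Expected obstacle: the constant $c$.} My computation, with rows and columns summing to $1$, gives $c=1/n$ rather than the stated $1/n^2$. So the main point to settle is the normalization convention, for example whether the scores or the marginals in \cref{apx:app-sinkhorn} carry an extra $1/n$ factor. I would check this against the appendix and adjust the constant accordingly. The structural content of the lemma, that the derivative at zero is a scalar multiple of $Q(Y)$, does not depend on this convention.

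A secondary point is rigour in Step 1, namely handling the redundant constraint in the implicit function theorem. The alternative I would try first is to differentiate the fixed-point relations $a=\mathbf{1}/(Eb)$ and $b=\mathbf{1}/(E^\top a)$ directly, solve the resulting linear system for $(\alpha,\beta)$ modulo the gauge, and confirm the same $H$.
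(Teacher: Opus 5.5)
Your argument is correct, and so is your constant $c=1/n$. The lemma as stated, with $\frac{t}{n^2}Q(Y)$, is false, so do not look for a convention that recovers $1/n^2$. \cref{apx:app-sinkhorn} normalizes rows and columns to sum to $1$, and no extra factor enters anywhere.

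A two-by-two check settles it. Take $n=2$ and $Y=E_{11}$. The Sinkhorn output has diagonal entries $p$ and off-diagonal entries $1-p$. Diagonal scaling preserves the cross-ratio, so $p^2/(1-p)^2=e^{t}$, which gives $p=\tfrac12+\tfrac t8+O(t^3)$. Since $Q(Y)_{11}=\tfrac14$, the first-order coefficient is $\tfrac1n Q(Y)_{11}=\tfrac18$, not $\tfrac1{n^2}Q(Y)_{11}=\tfrac1{16}$.

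\textbf{Where the paper's factor comes from.} The paper takes a different route: it expands one row step followed by one column step, then invokes $Q^2=Q$ to pass to the Sinkhorn limit. In \cref{sink_approx}, $C$ is the column \emph{softmax}, applied to $R(tX)$, which is already a matrix of probabilities. This exponentiates the entries a second time, so the softmax Jacobian $\tfrac1n U$ at zero is picked up twice. A genuine Sinkhorn column step divides by column sums; in the log domain it is $C(\log R(tX))$. Its Jacobian at the uniform column is $U$, which gives $\tfrac tn UXU$.

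The same slip makes the paper's iteration argument inconsistent. Iterating its $F=C\circ R$ would multiply the linear term by $n^{-2}$ at every pass. With true normalizations, each half-step multiplies the linear term by $U$ on one side, and idempotency of $U$ keeps it stable. Even after this fix, the paper's route still needs the $o(t)$ remainders to be uniform in the number of iterations before taking the limit.

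\textbf{What your route buys.} Your fixed-point approach avoids the limit issue entirely. Sinkhorn's uniqueness theorem plus the implicit function theorem characterize the limit directly and give smooth (indeed $O(t^2)$-remainder) dependence on $t$. The orthogonality of $\alpha\mathbf{1}^\top+\mathbf{1}\beta^\top$ to $\TDS$ then identifies the derivative with no iteration at all. Your handling of the gauge direction $(\mathbf{1},-\mathbf{1})$ and of the redundant constraint $\mathbf{1}^\top P\mathbf{1}=n$ in Step 1 is exactly what the implicit function theorem needs.

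The corrected constant propagates downstream. In the proof of \cref{th:residual}, the factor $\tfrac{1}{n^2}\sqrt{\mathrm{rank}(A)}\le n^{-3/2}$ becomes $n^{-1/2}$, so $\sqrt{n^3 d_{qk}}$ there should read $\sqrt{n\,d_{qk}}$.
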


In \cref{app-qsink} we show that:
$$
(C \circ R)(tY)=\frac{1}{n}\mathbf{1}\mathbf{1}^\top+\frac{t}{n^2}Q(Y)+o(t)
$$
where $C(Y)$ and $R(Y)$ denote the column and row softmax normalizations of a matrix $Y$, respectively.
Since the Sinkhorn operator consists of a composition of such operators and $Q$ is a projection operator, i.e. $Q^2=Q$, {the stated first-order approximation follows}.

\begin{proposition}\label{qprop}
Let the notation be as above. We have:
\[
\|Q(Y)\|_2
\le \|Q(Y)\|_F
\le \lambda \sqrt{\mathrm{rank}(Y)}\,\|Y\|_2
\]
where $0 < \lambda \leq 1$, while $\|\cdot\|_2$ and $\|\cdot\|_F$ denote respectively the $\ell_2$ and the Frobenius norm in the space of $n\times n$ matrices.
\end{proposition}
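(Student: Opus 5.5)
The approach is to write $Q$ as a two-sided multiplication by a single orthogonal projection, and then combine the standard comparison between spectral and Frobenius norms with the fact that multiplying by a projection cannot increase singular values. Let
\[
\Pi := I - \tfrac{1}{n}\mathbf{1}\mathbf{1}^\top ,
\]
the orthogonal projection of $\R^n$ onto $\mathbf{1}^\perp$. Expanding the product $\Pi Y \Pi$ gives exactly the four terms in the definition of $Q$, so
\[
Q(Y)=\Pi Y \Pi .
\]
This identity immediately gives $Q(Y)\mathbf{1}=0$ and $\mathbf{1}^\top Q(Y)=0$, and since $\Pi$ is symmetric and idempotent it also shows that $Q$ is idempotent and self-adjoint for the Frobenius inner product.

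The first inequality, $\|Q(Y)\|_2\le\|Q(Y)\|_F$, is the general fact $\|A\|_2=\sigma_{\max}(A)\le(\sum_i\sigma_i(A)^2)^{1/2}=\|A\|_F$. Nothing specific to $Q$ is needed here.

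For the second inequality, the plan has three steps.

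\begin{enumerate}
\item \emph{Rank.} Since $Q(Y)=\Pi Y\Pi$, we get $\mathrm{rank}(Q(Y))\le\mathrm{rank}(Y)$.
\item \emph{Singular values.} Since $\|\Pi\|_2=1$, the Courant--Fischer characterization (equivalently, $\sigma_i(ABC)\le\|A\|_2\,\sigma_i(B)\,\|C\|_2$) gives $\sigma_i(Q(Y))\le\sigma_i(Y)$ for every $i$.
\item \emph{Combination.} With $r=\mathrm{rank}(Y)$,
\[
\|Q(Y)\|_F^2=\sum_{i\le r}\sigma_i(Q(Y))^2\le\sum_{i\le r}\sigma_i(Y)^2\le r\,\|Y\|_2^2 .
\]
\end{enumerate}

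This yields the bound with constant $1$. To obtain the sharper constant in the statement, set
\[
\lambda:=\frac{\|Q(Y)\|_F}{\sqrt{\mathrm{rank}(Y)}\,\|Y\|_2}\in[0,1],
\]
which by step 3 never exceeds $1$. One can also note $\lambda\le\|Y\|_F/(\sqrt{r}\,\|Y\|_2)$, which records the loss from projecting away the $\mathbf{1}$-directions together with the spread of the singular values.

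The main obstacle is the interpretation of $\lambda$. Its range is only meaningful if the inequality holds with the constant $1$ uniformly, and that is exactly what the singular-value comparison in step 2 provides; I would state it carefully via Courant--Fischer. The strict positivity $\lambda>0$ requires $Q(Y)\neq 0$. This fails when $Y=\mathbf{1}a^\top+b\mathbf{1}^\top$, and then the inequality holds trivially for any $\lambda$. I would therefore treat that case separately, choosing any $\lambda\in(0,1]$ there, and define $\lambda$ as the ratio above otherwise. Uniformity over the inputs used later in the paper, such as $Y=XW_QW_K^\top X^\top$, is then obtained by taking the supremum of these ratios, which is still at most $1$.
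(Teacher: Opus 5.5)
Your proof is correct, and its skeleton matches the paper's. In both, the second inequality is the chain $\|Q(Y)\|_F\le\|Y\|_F\le\sqrt{\mathrm{rank}(Y)}\,\|Y\|_2$, since your $\sum_{i\le r}\sigma_i(Y)^2$ is just $\|Y\|_F^2$. The difference is how you obtain the contraction $\|Q(Y)\|_F\le\|Y\|_F$. The paper uses only that $Q$ is a Frobenius-orthogonal projection, so $\|Y\|_F^2=\|Q(Y)\|_F^2+\|(I-Q)(Y)\|_F^2$ gives it immediately. You instead factor $Q(Y)=\Pi Y\Pi$ and invoke $\sigma_i(\Pi Y\Pi)\le\sigma_i(Y)$.

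The paper's route is more elementary, and you already had its ingredient, since you noted that $Q$ is self-adjoint and idempotent for $\langle\cdot,\cdot\rangle_F$. Your rank step is also redundant, because $\sigma_i(Q(Y))\le\sigma_i(Y)=0$ for $i>\mathrm{rank}(Y)$. On the other hand, your factorization yields more: $Q$ contracts every unitarily invariant norm. In particular $\|Q(Y)\|_2\le\|\Pi\|_2^2\,\|Y\|_2\le\|Y\|_2$ by submultiplicativity alone. This removes the factor $\sqrt{\mathrm{rank}(Y)}$ whenever only the spectral norm is needed, as in the single-head single-layer theorem.

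Regarding $\lambda$, both arguments give a $Y$-dependent constant. The paper's is $\|Q(Y)\|_F/\|Y\|_F$; yours is the smallest admissible one. You also treat the case $Q(Y)=0$ more carefully than the paper. No uniform $\lambda<1$ can exist. For $n\ge2$ and $Y=\Pi$ one has $Q(Y)=Y$, $\|Y\|_F=\sqrt{n-1}$, $\mathrm{rank}(Y)=n-1$ and $\|Y\|_2=1$, so equality holds with $\lambda=1$. Moreover, the matrix $A=RW_{QK}R^\top/\sqrt{d_{qk}}$ to which the paper later applies the proposition already lies in $\TDS$, because $\mathbf{1}^\top R=0$. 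There $Q(A)=A$, and the paper's own ratio equals $1$. So taking a supremum, as in your last step, returns $\lambda=1$, and the real content of the proposition, in either proof, is the bound with $\lambda=1$.
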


This is a consequence of the orthogonal decomposition with respect to the Frobenius norm:
$Y = Q(Y) + (I - Q)(Y)$, and of the relationship between the $\ell_2$ and the Frobenius norm for matrices, see \cref{app-q} for more details.

\paragraph{Proof details.} The key to the proof of \cref{th:residual} is to exploit the shift-invariance of the Sinkhorn operator $P$ {with respect to constant column or row matrices, so that} the self-attention mechanism for a single-head single-layer $\mathrm{SA}_h$ {is rewritten} as:
\begin{equation}\label{san_res}
\begin{aligned}
\mathrm{SA}_h &= P(A)XW_V = \mathrm{Sinkhorn}(A)XW_V, \quad
\text{with} \quad A := R \frac{W_{QK}}{\sqrt{d_{qk}}} R^\top, \quad R := \mathrm{res}(X),
\end{aligned}
\end{equation}
where we have omitted the head index $h$ and $W_{QK} = W_Q W_K^\top$.

Then, we employ the projection operator $Q$ in \cref{lem:Q_projection} to approximate $P(A)$:
\begin{equation}\label{P_A}
P(A) X\approx \left[\frac{1}{n}\mathbf{1}\mathbf{1}^\top+\frac{1}{n^2}Q(A)\right]X=
\mathbf{1}x^\top + \frac{1}{n^2}Q(A)(X - \mathbf{1}x^\top)
= \mathbf{1}x^\top +\frac{1}{n^2} Q(A)R,
\end{equation}

where $\frac{1}{n}\mathbf{1}\mathbf{1}^\top X=\mathbf{1}x^\top$ and $Q(A)\mathbf{1}x^\top=0$.

We multiply \ref{P_A} by $W_V$ on both sides and we get:

\begin{equation}\label{mult_value}
P(A)XW_V - \mathbf{1}{x^\top}W_V\approx \frac{1}{n^2} Q(A)RW_V
\end{equation}

By explicitly computing the residual matrix of $\mathrm{SA}{_h}(X)$, we obtain:

\begin{equation}\label{res_san}
\mathrm{res}(\mathrm{SA}{_h}(X))
=
P(A)XW_V - \mathbf{1} x^\top W_V
\end{equation}

Then, we substitute \ref{res_san} in \ref{mult_value} and taking the $\ell_2$ norm gives:

\[
\|\mathrm{res}(\mathrm{SA}{_h}(X))\|_2
\le
\frac{1}{n^2}\|Q(A)\|_2\,\|W_V\|_2\,\|\mathrm{res}(X)\|_2\
\]

By using the estimate on $Q(A)$ given by \cref{qprop}, we obtain the {norm bound} result in \cref{th:residual}.

\section{Experiments}
\label{experiments}
 
In this section, we are interested in measuring the effective rank decay in a Transformer architecture, extending the analysis beyond the pure self-attention case, while taking into account the effect of skip connections and feed-forward layers. Following \cite{dong2021attention}, we consider four different settings: a pure self-attention network $\mathrm{SAN}(X)$; a $\mathrm{SAN}(X)$ with skip connections; a $\mathrm{SAN}(X)$ with feed-forward layers; and a Transformer with both skip connections and feed-forward layers. 

Models are trained using the full Transformer architecture, including both skip connections and feed-forward layers. The rank analysis is then performed at inference time under the four configurations described above, without retraining. To this end, we modify the forward pass by selectively disabling skip connections and/or bypassing the feed-forward layers. In particular, skip connections are removed by avoiding the addition of the residual input, while feed-forward layers are bypassed by omitting their application.

Our goal is to compare rank decay in a trained architecture when attention is normalized with Softmax (row-stochastic) and when it is normalized with Sinkhorn (doubly stochastic). In all experiments, we use existing Transformer architectures and train them from scratch, i.e., starting from random initialization of all model parameters rather than from pretrained weights, by replacing the Softmax operator with the Sinkhorn algorithm. We employ the code provided in \cite{sinkformer}\footnote{\url{https://github.com/michaelsdr/sinkformers}}, which implements Sinkhorn normalization in the log domain for numerical stability (see \cref{apx:app-sinkhorn}). Additional experimental details are reported in \cref{expe-details}.

\paragraph{Datasets and tasks.}
{We introduce the datasets and tasks used in our experiments, which include both text and image classification. For text classification, we train an encoder-only Transformer on the AG’s News dataset \cite{ag_news}. The architecture consists of a self-attention encoder followed by a max pooling over the sequence length and a linear classification layer to predict the article category. The task is to classify each news article into one of four categories: ``World'', ``Sports'', ``Business'', and ``Science/Technology''. For image classification, we train an encoder-only Vision Transformer (ViT) \cite{dosovitskiy2021vit} on the MNIST \cite{mnist} and Cats and Dogs \cite{dogs_vs_cats} datasets.  The architecture consists of a self-attention encoder followed by a linear classification layer to predict the image class. The task is to classify images into the ten digits classes for MNIST and the two classes for Cats and Dogs.}

\subsection{Rank decay along attention paths}
\label{chained_att}

{We study how quickly products of attention matrices approach rank one as the number of factors increases. \cref{fig:path_ag,fig:path_mnist,fig:path_catsdogs} show the normalized residual with respect to the best rank-one approximation for products of attention matrices sampled from trained Transformer models. The $x$-axis reports the depth of the product, i.e., the number of matrices multiplied, while the $y$-axis shows the normalized spectral norm of the residual.}

\begin{figure}[htp]
\centering

\begin{subfigure}{\textwidth}
    \centering
    \includegraphics[width=1\textwidth]{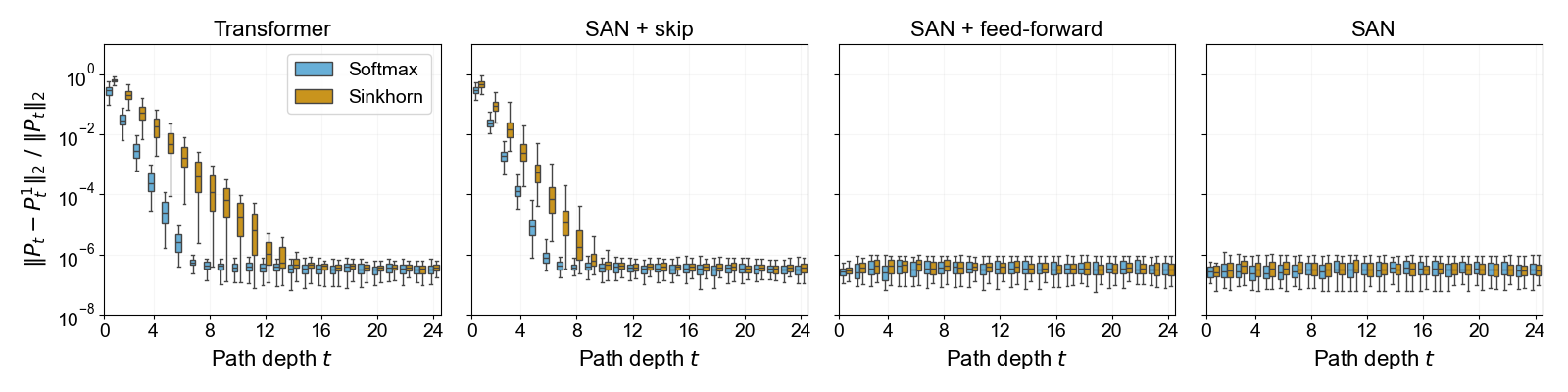}
    \caption{Transformer on AG's News dataset.}
    \label{fig:path_ag}
\end{subfigure}

\vspace{0.5em}

\begin{subfigure}{\textwidth}
    \centering
    \includegraphics[width=1\textwidth]{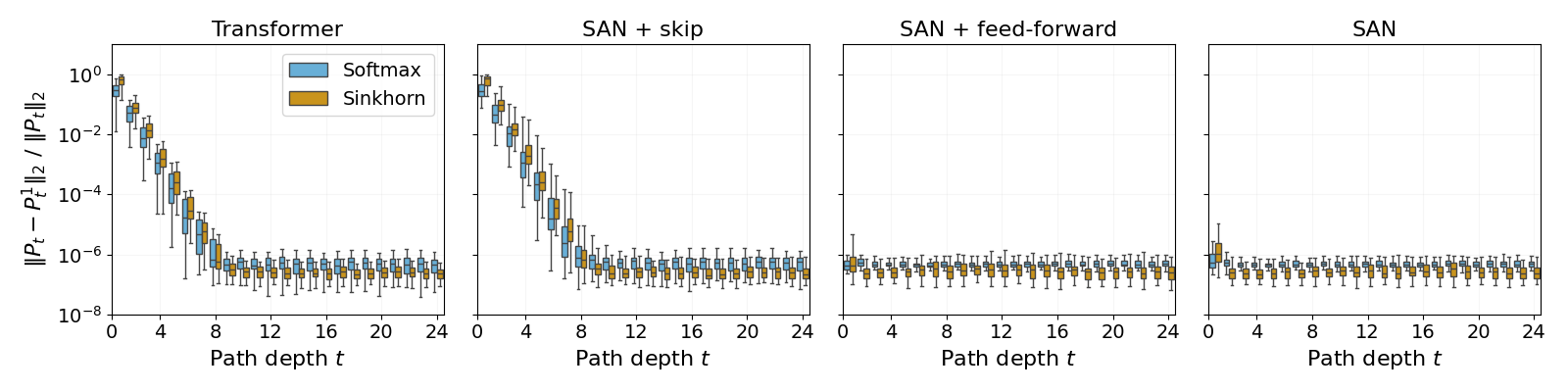}
    \caption{ViT on MNIST dataset.}
    \label{fig:path_mnist}
\end{subfigure}

\vspace{0.5em}

\begin{subfigure}{\textwidth}
    \centering
    \includegraphics[width=1\textwidth]{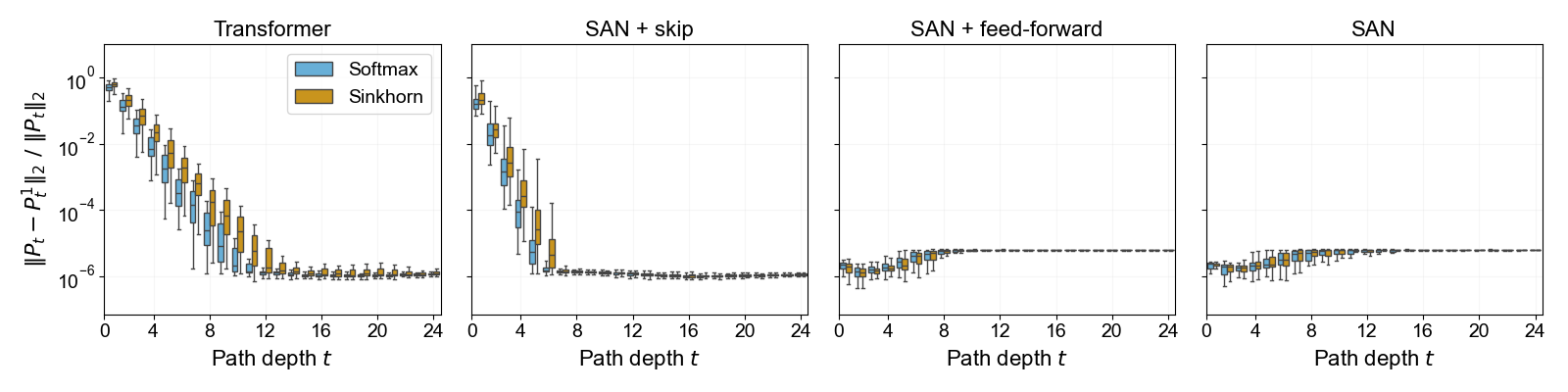}
    \caption{ViT on Cats and Dogs dataset.}
    \label{fig:path_catsdogs}
\end{subfigure}

\caption{
Normalized spectral norm of $\mathrm{res}(P_t)$ in \ref{equ:path_rank} as a function of path depth $t$, with the $y$-axis on a logarithmic scale. For each depth, results are estimated from $100$ sampled attention paths in trained Transformer architectures. The central line indicates the median, while the box spans the interquartile range, and the whiskers extend to non-outlier values. Lower values indicate rank closer to one.
}
\label{fig:path_all}
\end{figure}

Across datasets and architectures, we observe that Sinkhorn doubly stochastic normalization mitigates rank collapse compared to row-stochastic Softmax normalization. This effect is particularly evident when skip connections are present. Indeed, as already observed in \cite{dong2021attention}, skip connections help prevent rank collapse in self-attention layers. Moreover, since both Softmax and Sinkhorn are implemented through exponential maps, the increase in rank induced by skip connections amplifies the advantage of Sinkhorn normalization observed for pure self-attention, leading to a larger gap between the two normalization schemes. 

{In the {path decomposition} argument of \cite{dong2021attention}, skip connections are interpreted as omitting factors in the product of attention matrices $\left(P^{L}_{h_L} \cdots P^{1}_{h_1}\right)$ appearing in \ref{equ:SAN}.
Each path corresponds to a sequence of attention heads across layers, and a skip connection effectively bypasses a layer, reducing the number of matrices in the product. This provides an intuitive explanation for why skip connections slow down rank decay: they shorten the sequence of stochastic matrices being multiplied. Since repeated products of stochastic matrices converge to a rank-one matrix as the number of factors increases \cite{wolfowitz,Anthonisse1977ExponentialCO}, introducing skips interrupts this process and helps preserve rank.}

In \cref{chained_random}, we analyze the rank for products of randomly generated stochastic matrices and observe that the difference between Softmax and Sinkhorn normalization disappears. This suggests that the improved rank preservation observed in Transformers trained with doubly stochastic attention may arise from the existence of correlations between attention matrices across layers. Understanding the role of such correlations represents an interesting direction for future work.

{We now describe how the products of attention matrices used in these plots are constructed.}
For each depth $t \in \{1,\dots,T\}$, we estimate how close a product of $t$ randomly sampled attention matrices from the trained Transformer is to being rank one.
Let $P_{h_{\ell},b}^{\ell} \in \mathbb{R}^{n \times n}$ denote the attention matrix at layer $\ell \in \{1,\dots,L\}$, head $h \in \{1,\dots,H\}$, and sample $b \in \{1,\dots,\beta\}$, where $\beta$ is the batch size and $n$ is the sequence length. 

We follow \cite{dong2021attention} and construct a random attention {path} of depth $t$ as follows:

\begin{enumerate}[leftmargin=*]
\item Sample $t$ distinct layers $\{\ell_1,\dots,\ell_t\}$ uniformly without replacement, and sort them so that $\ell_1 < \cdots < \ell_t$.
\item For each selected layer $\ell_i \in \{\ell_1,\dots,\ell_t\}$, sample one head $h$ and one batch index $b$ uniformly.
\end{enumerate}

The chained attention product at depth $t$ is:
\begin{equation}\label{equ:path_product}
P_t
=
\prod_{i=1}^{t}
P_{h_{\ell_i},b}^{\ell_i}
\quad \in \mathbb{R}^{n \times n},
\end{equation}
where matrices are multiplied in increasing layer order.

{To quantify how close $P_t$ is to rank one,} we compute its best rank-one approximation via truncated singular value decomposition at the first-order:
\[
P_t^{1} = \sigma_1 u_1 v_1^\top,
\]
where $\sigma_1$ is the largest singular value of $P_t$ and $u_1, v_1$ are the corresponding singular vectors.

The resulting residual matrix of $P_t$ is approximated as:
\begin{equation}\label{equ:path_rank}
\mathrm{res}(P_t)\ = P_t - P_t^{1},\
\end{equation}

and we measure its normalized spectral norm as $\frac{\|\mathrm{res}(P_t)\|_2}{\|P_t\|_2}$. To obtain an empirical estimate, we repeat this sampling procedure $100$ times for each depth $t$. Results are reported in \cref{fig:path_ag,fig:path_mnist,fig:path_catsdogs}.

\subsection{Rank decay of the self-attention output along layers}
\label{san_residual}

Let $\mathrm{SAN}(X) \in \mathbb{R}^{L \times \beta \times n \times d}$ denote the output of the self-attention network, where $L$ is the number of layers,  $\beta$ the batch size, $n$ the number of tokens, and $d$ the embedding dimension.
For each layer $\ell \in \{1,\dots,L\}$ and batch element $b \in \{1,\dots,\beta\}$, we denote by $\mathrm{SAN}(X)_{\ell,b} \in \mathbb{R}^{n \times d}$
the matrix of token representations at that layer and for that data sample.
To approximate the limit rank-one matrix of identical representations, we follow \cite{dong2021attention} and we average on the rows of $\mathrm{SAN}(X)_{\ell,b}$:
\[
x^\top_{\mathrm{SAN}(X)_{\ell,b}} = \frac{1}{n} \mathbf{1}^\top \mathrm{SAN}(X)_{\ell,b} \;\in\;\mathbb{R}^{d}
\]
The residual of $\mathrm{SAN}(X)_{\ell,b}$ is then computed as:

\begin{equation}\label{equ:res_san}
\mathrm{res}(\mathrm{SAN}(X)_{\ell,b}) = \mathrm{SAN}(X)_{\ell,b} - \mathbf{1} x^\top_{\mathrm{SAN}(X)_{\ell,b}},
\end{equation}

and we measure its normalized spectral norm as $\frac{\|\mathrm{res}(\mathrm{SAN}(X)_{\ell,b})\|_2}{\|\mathrm{SAN}(X)_{\ell,b}\|_2}$.

\begin{figure}[htp]
\centering

\begin{subfigure}{0.9\textwidth}
    \centering
    \includegraphics[width=\textwidth]{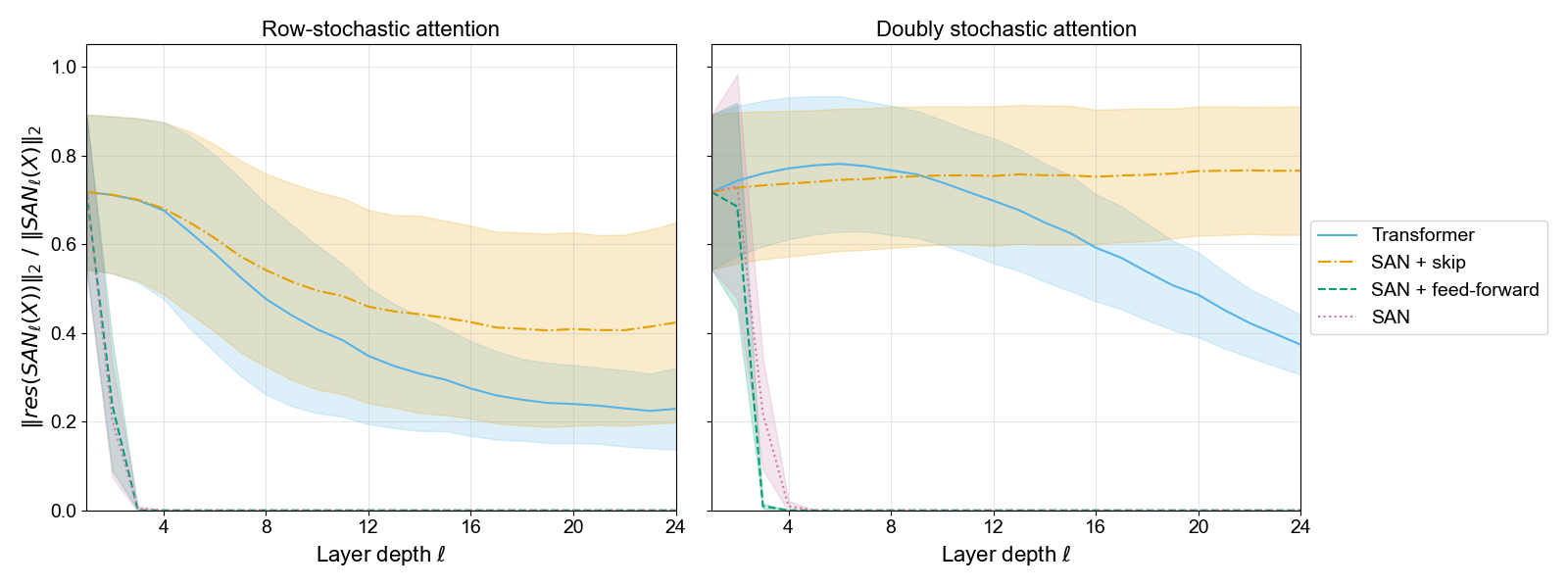}
    \caption{Transformer on AG’s news dataset.}
    \label{fig:residual_ag}
\end{subfigure}

\begin{subfigure}{0.9\textwidth}
    \centering
    \includegraphics[width=\textwidth]{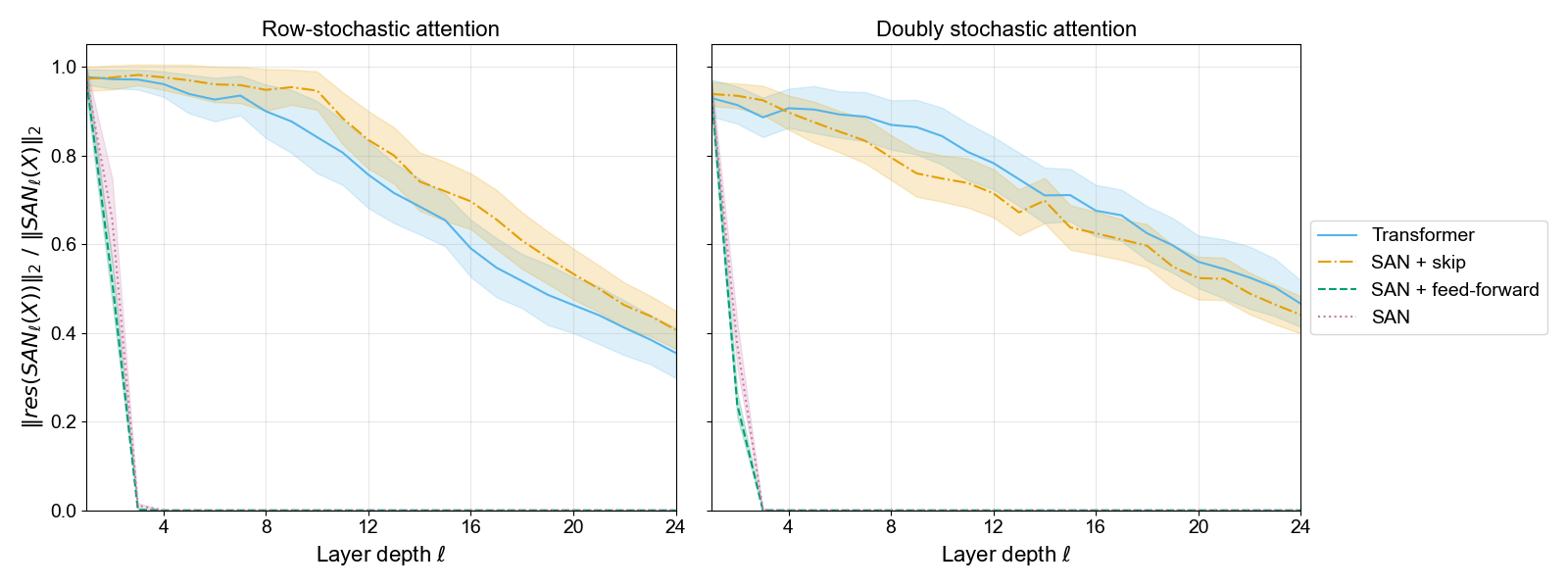}
    \caption{ViT on MNIST dataset.}
    \label{fig:residual_mnist}
\end{subfigure}

\vspace{0.5em}

\begin{subfigure}{0.9\textwidth}
    \centering
    \includegraphics[width=\textwidth]{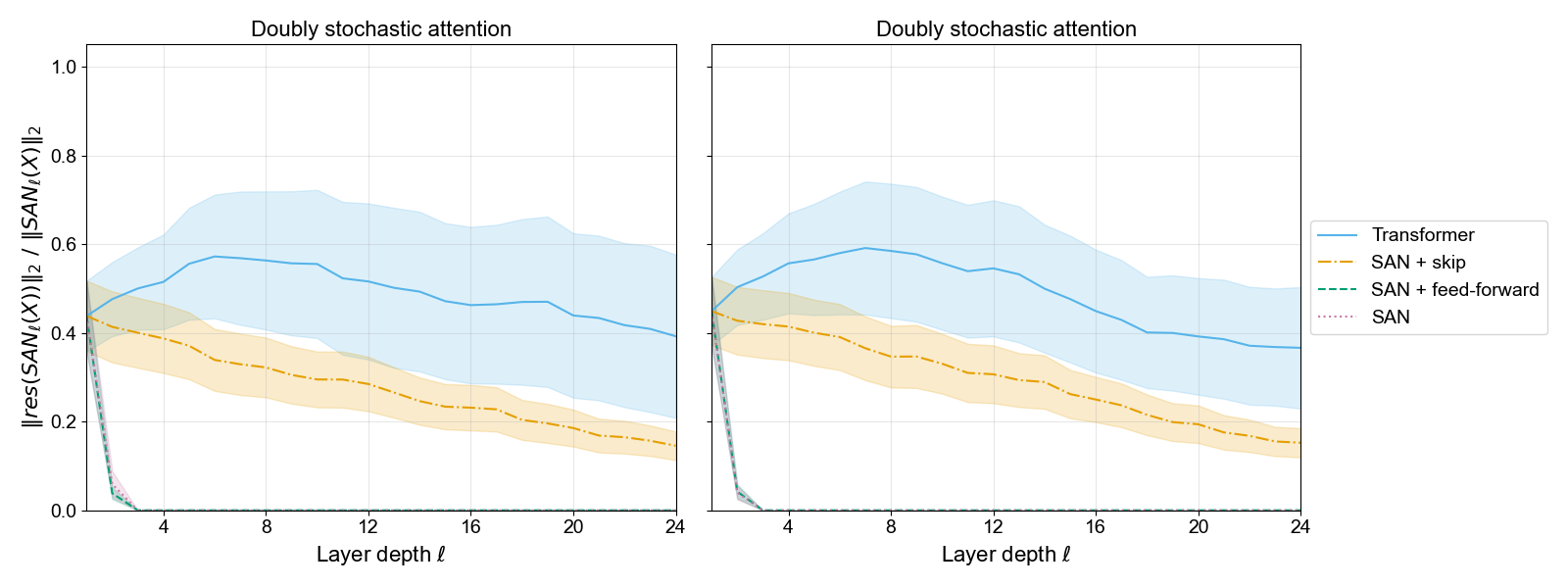}
    \caption{ViT on Cats and Dogs dataset.}
    \label{fig:residual_catsdogs}
\end{subfigure}

\caption{Normalized spectral norm of $\mathrm{res}(\mathrm{SAN}(X)_{\ell})$ in \ref{equ:res_san} as a function of layer depth $\ell$. Results show the mean over the batch, with shaded regions indicating one standard deviation. We use $\mathrm{SAN}(X)$ in the y-axis label to denote all four settings: a pure self-attention network, a $\mathrm{SAN}(X)$ with skip connections, a $\mathrm{SAN}(X)$ with feed-forward layers, and a Transformer with both skip connections and feed-forward layers.}
\label{fig:residual_all}
\end{figure}

For each layer $\ell$, we obtain $\beta$ values and we plot the batch mean and standard deviation. In \cref{fig:residual_ag} we show the results for AG’s news dataset, while the other two \cref{fig:residual_mnist,fig:residual_catsdogs} report MNIST \cite{mnist} and Cats and Dogs \cite{dogs_vs_cats} results.

In accordance with \cref{fig:path_ag,fig:path_mnist,fig:path_catsdogs}, which report the behavior of chained attention products, \cref{fig:residual_ag,fig:residual_mnist,fig:residual_catsdogs} show that the rank is preserved when skip connections are present. However, a clear advantage of Sinkhorn normalization over Softmax is observed only in \cref{fig:residual_ag}, where the rank of $\mathrm{SAN}(X)$ is better preserved, consistently with \cref{fig:path_ag}. In contrast, for \cref{fig:residual_mnist,fig:residual_catsdogs}, Softmax and Sinkhorn normalization exhibit very similar behavior in terms of $\mathrm{res}(\mathrm{SAN}(X))$ across layers, despite the advantage of Sinkhorn observed in the rank of attention matrix products in \cref{fig:path_mnist,fig:path_catsdogs}. 

Indeed, with respect to the {path} attention product defined in \ref{equ:path_product}, the output of $\mathrm{SAN}(X)$ is not solely determined by this attention product, but also by its multiplication with the input $X$ and the products of matrices $\left(W^{1}_{h_1} \cdots W^{L}_{h_L}\right)$, see \ref{equ:SAN}. As a consequence, the relationship between the curves corresponding to the Transformer and the SAN + skip settings in \cref{fig:residual_ag,fig:residual_mnist,fig:residual_catsdogs} does not perfectly match the one shown in \cref{fig:path_ag,fig:path_mnist,fig:path_catsdogs}. We emphasize that the most direct and informative way to compare the effect of Softmax and Sinkhorn normalization on rank collapse across layers is to analyze the attention matrices themselves, as done in \cref{chained_att}. 

\section{Conclusion}

We study the phenomenon of rank collapse in a Transformer architecture, when the attention matrix is normalized to be doubly stochastic using the Sinkhorn algorithm. While previous work has analyzed rank collapse for standard row-stochastic Softmax attention, the behavior of doubly stochastic normalization has not yet been characterized.

From a theoretical perspective, we derive norm bounds for the decay of the residual matrix in a pure self-attention network without skip connections and feed-forward layers. Our analysis shows that, similarly to the Softmax case studied in \cite{dong2021attention}, pure self-attention with Sinkhorn normalization converges to a rank-one matrix doubly exponentially with depth. The proof relies on a first-order approximation of the Sinkhorn operator and employs the spectral norm, which provides a proper norm-based estimate for the decay.

From an empirical perspective, we evaluate rank decay in trained Transformer architectures on both natural language and vision tasks. Our experiments show that doubly stochastic normalization mitigates rank collapse compared to standard Softmax normalization. This effect is particularly pronounced in the presence of skip connections, which are known to slow down rank decay \cite{dong2021attention}.

An interesting direction for future work concerns the stability properties of attention mechanisms. Recent work \cite{Nair2025SoftmaxI1} shows that the Softmax operator is $\frac{1}{2}$-Lipschitz with respect to any norm and uses this result to refine the global Lipschitz constant for scaled cosine similarity attention \cite{Qi2023LipsFormerIL}, as it has been previously shown that the Lipschitz constant for standard row-stochastic self-attention is infinite with respect to any norm \cite{Kim2020TheLC}. Comparing the Lipschitz properties of Sinkhorn-normalized doubly stochastic attention with those of Softmax could provide further insight into the robustness and optimization behavior of Transformer architectures, since larger Lipschitz constants typically imply higher sensitivity to input perturbations and more challenging training dynamics \cite{Cranko2018LipschitzNA}.

\subsubsection*{Acknowledgments}
This research was supported by Gnsaga-Indam, by COST Action CaLISTA CA21109, HORIZON-MSCA-2022-SE-01-01 CaLIGOLA, MSCA-DN CaLiForNIA - 101119552, PNRR MNESYS, PNRR National Center for HPC, Big Data and Quantum Computing, INFN Sezione Bologna.

\bibliographystyle{unsrt}  
\bibliography{references}  

\begin{appendices}

\crefalias{section}{appendix}

\section{{Sinkhorn algorithm and its implementation}} \label{apx:app-sinkhorn}

In \cite{Sinkhorn1964}, Sinkhorn shows that for each positive matrix $Y \in \mathbb{R}_{>0}^{n\times n}$ there is a unique doubly stochastic matrix $S = D_1 Y D_2$, i.e., satisfying $\sum_{j=1}^n S_{ij} = 1$ and $\sum_{i=1}^n S_{ij} = 1$, where $D_1$ and $D_2$ are diagonal matrices with positive main diagonals.

The doubly stochastic matrix $S$ is obtained as the limit of the sequence of matrices generated by alternatively normalizing the rows and columns of $Y$. The row $\mathcal{R}$ and column $\mathcal{C}$ normalization operators are defined as:

\[
\mathcal{R}(Y) = \operatorname{diag}\Bigl(\tfrac{1}{\sum_{j=1}^n Y_{ij}}\Bigr)_{i=1}^n\,Y,\qquad
\mathcal{C}(Y) = Y\,\operatorname{diag}\Bigl(\tfrac{1}{\sum_{i=1}^n Y_{ij}}\Bigr)_{j=1}^n
\]

\begin{equation}\label{eq:sink}
(\mathcal{R}(Y))_{ij} = \frac{Y_{ij}}{\sum_{k=1}^nY_{ik}},\qquad
(\mathcal{C}(Y))_{ij} = \frac{Y_{ij}}{\sum_{k=1}^nY_{kj}},
\end{equation}
and the sequence of matrices $\{\mathcal{Y}^{(k)}\}_{k\ge0}$ is:
\[
\mathcal{Y}^{(0)} = Y,\quad \mathcal{Y}^{(2k+1)} = \mathcal{R}(\mathcal{Y}^{(2k)}),\quad \mathcal{Y}^{(2k+2)} = \mathcal{C}(\mathcal{Y}^{(2k+1)}),\quad \text{with} \lim_{k\to\infty} \mathcal{Y}^{(k)} = S 
\]
In \cite{Sinkhorn1967ConcerningNM}, this result is generalized to nonnegative matrices $Y$.

In our implementation, we follow \cite{sinkformer}\footnote{\url{https://github.com/michaelsdr/sinkformers}} and employ Sinkhorn algorithm to solve the entropy-regularized optimal transport problem between queries and keys, where the transport cost is given by the unnormalized attention scores. More details about how the optimal transport problem reduces to a matrix scaling problem can be found in \cite{vialard:hal-02303456}. The solution to the optimal transport problem is obtained by alternately normalizing the rows and columns of the transport matrix until convergence. Convergence is estimated empirically and occurs when successive Sinkhorn iterations change the matrix by less than a predefined threshold. To improve numerical stability, the normalization is performed in the log domain. Convergence of the procedure is preserved because the $\log$ and $\exp$ functions are bijective.

\section{The projection operator $Q$}\label{app-q}

For the reader's convenience we detail some properties of the projection operator:
\begin{equation}\label{lem:Q_frobenius}
Q: \R^{n\times n} \lra  \TDS, \qquad Q(Y)
= Y
- \frac{1}{n} \mathbf{1}\mathbf{1}^\top Y
- \frac{1}{n} Y \mathbf{1}\mathbf{1}^\top
+ \frac{1}{n^2} \mathbf{1}\mathbf{1}^\top Y \mathbf{1}\mathbf{1}^\top,
\end{equation}
where $\R^{n\times n}$ are the $n\times n$ real matrices and:
$$
 \TDS:= \{ H \in \mathbb{R}^{n\times n} \mid
H\mathbf{1} = 0,\;
\mathbf{1}^\top H = 0 \}
$$

\begin{proposition}
The linear map $Q$ is an orthogonal projection with respect to the Frobenius inner product in the space of $n \times n$ matrices, which is defined by:
$$
\langle A, B\rangle_F := \sum_{i,j=1}^n a_{ij}b_{ij},
\qquad A=(a_{ij}), \, B=(b_{ij}) \in \R^{n\times n}
$$
\end{proposition}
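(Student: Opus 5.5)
We will show that $Q$ is linear, idempotent ($Q^2=Q$), and self-adjoint for $\langle\cdot,\cdot\rangle_F$. A linear idempotent self-adjoint map is exactly an orthogonal projection, onto its image. We will also check that this image is $\TDS$.

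\textbf{Factorization.} Set $J:=\frac1n\mathbf{1}\mathbf{1}^\top$ and $\Pi:=I-J$. Then $J$ is symmetric, $J^2=J$ (because $\mathbf{1}^\top\mathbf{1}=n$), and so $\Pi$ is a symmetric idempotent matrix. Expanding the product gives
\[
\Pi Y\Pi = Y-JY-YJ+JYJ ,
\]
which is exactly the definition of $Q(Y)$, since $JYJ=\frac{1}{n^2}\mathbf{1}\mathbf{1}^\top Y\mathbf{1}\mathbf{1}^\top$. Hence $Q(Y)=\Pi Y\Pi$, and linearity in $Y$ is immediate.

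\textbf{Idempotence.} Using $\Pi^2=\Pi$,
\[
Q(Q(Y))=\Pi\Pi Y\Pi\Pi=\Pi Y\Pi=Q(Y).
\]

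\textbf{Self-adjointness.} With $\langle A,B\rangle_F=\mathrm{tr}(A^\top B)$, cyclicity of the trace and $\Pi^\top=\Pi$ give
\[
\langle Q(A),B\rangle_F=\mathrm{tr}(\Pi A^\top\Pi B)=\mathrm{tr}(A^\top \Pi B\Pi)=\langle A,Q(B)\rangle_F .
\]
It follows that $Y-Q(Y)$ is Frobenius-orthogonal to $\mathrm{Im}\,Q$. Indeed, $\langle Y-Q(Y),Q(Z)\rangle_F=\langle Q(Y)-Q^2(Y),Z\rangle_F=0$.

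\textbf{Image equals $\TDS$.} Since $\Pi\mathbf{1}=\mathbf{1}-\mathbf{1}=0$ and $\mathbf{1}^\top\Pi=0$, we get $Q(Y)\mathbf{1}=0$ and $\mathbf{1}^\top Q(Y)=0$, so $\mathrm{Im}\,Q\subseteq\TDS$. Conversely, take $H\in\TDS$. Then $JH=\frac1n\mathbf{1}(\mathbf{1}^\top H)=0$ and $HJ=0$, hence $Q(H)=H$, so $\TDS\subseteq\mathrm{Im}\,Q$ and $Q$ restricts to the identity on $\TDS$.

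\textbf{Main obstacle.} There is no real obstacle; the argument is routine. The only step needing care is recognizing the factorization $Q(Y)=\Pi Y\Pi$, after which everything follows from $\Pi$ being a symmetric idempotent matrix. As a by-product, the decomposition $Y=Q(Y)+(I-Q)(Y)$ is Frobenius-orthogonal, which is what the proof of \cref{qprop} relies on.
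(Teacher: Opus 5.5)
Your proof is correct, but it gets there by a different mechanism than the paper. The paper first checks $Q(Y)\mathbf{1}=0$ and $\mathbf{1}^\top Q(Y)=0$ by expanding the four terms. It then proves Frobenius symmetry entrywise on elementary matrices: it computes $Q(E_{ij})_{\alpha\beta}=\delta_{i\alpha}\delta_{j\beta}-\frac1n\delta_{j\beta}-\frac1n\delta_{i\alpha}+\frac1{n^2}$ and observes that this is invariant under $(i,j)\leftrightarrow(\alpha,\beta)$. Idempotence and the orthogonality of $Q(Y)$ and $(I-Q)(Y)$ are left to the reader. You instead factor $Q(Y)=\Pi Y\Pi$ with $\Pi=I-\frac1n\mathbf{1}\mathbf{1}^\top$ a symmetric idempotent matrix. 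After that, idempotence, self-adjointness (via trace cyclicity) and $\mathrm{Im}\,Q\subseteq\TDS$ each take one line. The paper's coordinate check is mechanical and needs no trace identities, but it only delivers symmetry. Your factorization delivers every property at once, including two the paper never verifies: $Q^2=Q$, and the reverse inclusion $\TDS\subseteq\mathrm{Im}\,Q$ (i.e.\ that $Q$ fixes $\TDS$). The latter is what justifies calling $Q$ the projection \emph{onto} $\TDS$ rather than onto some subspace of it. Your identity is also exactly the $Q(X)=UXU$ that the paper uses later when linearizing $C\circ R$, so your route makes the link between those two appendices explicit.
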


\begin{proof}
We first show that $Q$ is well-defined, i.e., $Q(Y) \in \TDS$. Indeed:
$$
\begin{array}{rl}
Q(Y)\mathbf{1} &= Y\mathbf{1}
- \frac{1}{n} \mathbf{1}\mathbf{1}^\top Y\mathbf{1}
- \frac{1}{n} Y \mathbf{1}\mathbf{1}^\top\mathbf{1}
+ \frac{1}{n^2} \mathbf{1}\mathbf{1}^\top Y \mathbf{1}\mathbf{1}^\top\mathbf{1}=\\ \\
&=Y\mathbf{1}
- \frac{1}{n} \mathbf{1}\mathbf{1}^\top Y\mathbf{1}
- \frac{1}{n} Y \mathbf{1}n
+ \frac{1}{n^2} \mathbf{1}\mathbf{1}^\top Y \mathbf{1}n=0
\end{array}
$$
Similarly, one shows that $\mathbf{1}^\top Q(Y) = 0$, hence $Q(Y) \in \TDS$.

Next, we show that $Q$ is symmetric with respect to $\langle \cdot, \cdot \rangle_F$.
It is enough to verify that:
\[
\langle Q(E_{ij}), E_{\alpha\beta} \rangle_F=\langle E_{ij}, Q(E_{\alpha\beta}) \rangle_F
\]
that is,
\[Q(E_{ij})_{\al\be}=Q(E_{\al\be})_{ij},\]
where $E_{ij}$ denotes an elementary matrix in $\R^{n \times n}$, with $i,j,\alpha,\beta \in \{1,\dots,n\}$.

We compute:
\[
Q(E_{ij}) = E_{ij}
- \frac{1}{n} \sum_{k=1}^n E_{kj}
- \frac{1}{n} \sum_{l=1}^n E_{il}
+ \frac{1}{n^2} \sum_{k,l=1}^n E_{kl}
\]

Hence:
\[
Q(E_{ij})_{\alpha\beta}
= {\delta_{i\alpha}\delta_{j\beta}}
- \frac{1}{n} \delta_{j\beta}
- \frac{1}{n} \delta_{i\alpha}
+ \frac{1}{{n^2}}
\]

Interchanging the roles of $i,j$ with $\al,\be$ we see immediately that $Q(E_{ij})_{\al\be}=Q(E_{\al\be})_{ij}$.

We leave to the reader the straightforward check that $Q$ is a projection operator, i.e., $Q^2=Q$, and that $Q(Y)$ is orthogonal to $(I-Q)(Y)$ with respect to the Frobenius inner product, which is an immediate consequence of $Q$ being symmetric.
\end{proof}

\begin{proposition}
Let the notation be as above. We have:
\[
\|Q(Y)\|_2
\le \|Q(Y)\|_F
\le \lambda \sqrt{\mathrm{rank}(Y)}\,\|Y\|_2
\]
where $0 < \lambda \leq 1$, while $\|\cdot\|_2$ and $\|\cdot\|_F$ denote respectively the $\ell_2$
and the Frobenius norm in the space of $n\times n$ matrices:
$$
\|A\|_2:=\mathrm{sup}\{\|Ax\|_{{2}}:x\in R^{n}{\text{ such that }}\|x\|_{{2}}\leq 1\}
\qquad \|A\|_F:= \sqrt{\sum_{i,j=1}^n|a_{ij}|^2}
$$
\end{proposition}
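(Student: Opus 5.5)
The argument is a chain of three inequalities, each following from a standard fact once $Q$ is known to be an orthogonal projection for $\langle\cdot,\cdot\rangle_F$, as established in the previous proposition.

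\emph{Step 1: $\|Q(Y)\|_2\le\|Q(Y)\|_F$.} For any $A\in\R^{n\times n}$ with singular values $\sigma_1\ge\dots\ge\sigma_n\ge 0$ we have
\[
\|A\|_2=\sigma_1\le\Bigl(\sum_i\sigma_i^2\Bigr)^{1/2}=\|A\|_F .
\]
Taking $A=Q(Y)$ gives the first inequality.

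\emph{Step 2: Pythagoras for $Q$.} Write $Y=Q(Y)+(I-Q)(Y)$. Since $Q$ is symmetric and idempotent, the two summands are Frobenius-orthogonal, so
\[
\|Y\|_F^2=\|Q(Y)\|_F^2+\|(I-Q)(Y)\|_F^2 .
\]
Hence $\|Q(Y)\|_F\le\|Y\|_F$. If $Y\neq 0$, define
\[
\lambda:=\frac{\|Q(Y)\|_F}{\|Y\|_F}\in[0,1],
\]
so that $\|Q(Y)\|_F=\lambda\|Y\|_F$ exactly. Here $\lambda$ is the cosine of the angle between $Y$ and $\TDS$. If $Y=0$ the statement is trivial and any $\lambda\in(0,1]$ works.

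\emph{Step 3: Frobenius versus spectral norm.} If $r=\mathrm{rank}(Y)$, then $Y$ has only $r$ nonzero singular values, each at most $\sigma_1=\|Y\|_2$. Therefore
\[
\|Y\|_F=\Bigl(\sum_{i\le r}\sigma_i^2\Bigr)^{1/2}\le\sqrt{r}\,\|Y\|_2 .
\]
Chaining Steps 1--3 gives
\[
\|Q(Y)\|_2\le\|Q(Y)\|_F=\lambda\|Y\|_F\le\lambda\sqrt{\mathrm{rank}(Y)}\,\|Y\|_2 .
\]

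\emph{The main obstacle is the strict positivity $\lambda>0$.} The Pythagoras identity only yields $\lambda\in[0,1]$. The value $\lambda=0$ occurs exactly when $Q(Y)=0$, i.e.\ when $Y$ has the form $\mathbf{1}a^\top+b\mathbf{1}^\top$. In that case both sides of the bound can be handled directly: the left-hand side is $0$, so the inequality holds for any positive $\lambda$ and we may pick $\lambda=1$. Otherwise the ratio defined in Step 2 is already strictly positive.

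This establishes the claim with $0<\lambda\le 1$. Note that $\lambda$ depends on $Y$; it can be bounded uniformly by $1$ but not, in general, away from $0$. This is how the constant enters Theorem~\ref{th:residual}.
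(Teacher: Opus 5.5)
Your proof is correct and follows the same route as the paper. The Frobenius-orthogonal splitting $Y=Q(Y)+(I-Q)(Y)$ gives $\|Q(Y)\|_F\le\|Y\|_F$, and $\|Y\|_F\le\sqrt{\mathrm{rank}(Y)}\,\|Y\|_2$ finishes; you are more careful than the paper in justifying $\|Q(Y)\|_2\le\|Q(Y)\|_F$ and in handling the degenerate case $Q(Y)=0$, and you rightly stress that $\lambda$ depends on $Y$ (any $Y$ already in the tangent space gives ratio exactly $1$, so the only $Y$-independent choice is $\lambda=1$).
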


\begin{proof}
We write $Y$ as its orthogonal decomposition with respect to the Frobenius inner product:
\[
Y = Q(Y) + (I - Q)(Y) 
\]
Since $\langle Q(Y), (I - Q)(Y) \rangle_F = 0$, then:
\[
\|Y\|_F^2 = \|Q(Y)\|_F^2 + \|(I - Q)(Y)\|_F^2
\]

Since $Y$ is not necessarily in $\TDS$, we have:
\[
\|(I-Q)(Y)\|_F \geq 0,
\]
{with equality if and only if $Y \in \TDS$, and strict inequality otherwise.}

Consequently, there exists a constant $0 < \lambda \leq 1$ such that:
\begin{equation}
\|Q(Y)\|_F \le \lambda \|Y\|_F
\label{eq:Q_contraction}
\end{equation}
Since the Frobenius norm is related to the 2-norm of a matrix as:
\[
\|Y\|_F \le \sqrt{\mathrm{rank}(Y)}\,\|Y\|_2
\]
Applying this norm bound to the contraction inequality in \ref{eq:Q_contraction}, we get:
\[
\|Q(Y)\|_F
\le \lambda \|Y\|_F
\le \lambda \sqrt{\mathrm{rank}(Y)}\,\|Y\|_2
\]
\end{proof}

\section{The projection operator $Q$ as Sinkhorn Approximation}
\label{app-qsink}

We now show that the operator $Q$, as defined in \cref{app-q}, represents a first-order approximation of the Sinkhorn operator. 

Sinkhorn operator is obtained via successive applications of row-softmax $R:\R^{n\times n} \lra \R^{n\times n}$ and column-softmax $C:\R^{n\times n} \lra \R^{n\times n}$ operators (see \cref{apx:app-sinkhorn}):
\[
(R(Y))_{ij} = \frac{e^{y_{ij}}}{\sum_{k=1}^n e^{y_{ik}}}, \qquad
(C(Z))_{ij} = \frac{e^{z_{ij}}}{\sum_{k=1}^n e^{z_{kj}}}
\]
where $Y, Z \in \R^{n\times n}$. It is then enough to show that $Q$ represents a first-order approximation of the composition of $C$ and $R$.

\begin{proposition}\label{sink_approx}
Let $F=C \circ R: \R^{n\times n} \lra \R^{n\times n}$ with $R$ row-softmax and $C$ column-softmax operators. Define $Q:\R^{n\times n} \lra \R^{n\times n}$,
$Q(Y) $ = $ Y$
$- \frac{1}{n} \mathbf{1}\mathbf{1}^\top Y
- \frac{1}{n} Y \mathbf{1}\mathbf{1}^\top
+ \frac{1}{n^2} \mathbf{1}\mathbf{1}^\top Y \mathbf{1}\mathbf{1}^\top$.
Then:
\begin{equation}\label{f-op}
F(tX)=\frac{1}{n}\mathbf{1}\mathbf{1}^\top+\frac{t}{n^2}Q(X)+o(t)
\end{equation}
\end{proposition}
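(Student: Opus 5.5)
The plan is a direct first-order Taylor expansion of the two softmax maps, composed using the chain rule. The basic ingredient is the expansion of the vector softmax $\sigma:\R^n\to\R^n$, $\sigma(v)_i=e^{v_i}/\sum_k e^{v_k}$, around $v=0$. Its Jacobian at $0$ is $\frac{1}{n}\bigl(I-\frac{1}{n}\mathbf{1}\mathbf{1}^\top\bigr)$. Since $\sigma$ is smooth, this gives
\[
\sigma(v)=\frac{1}{n}\mathbf{1}+\frac{1}{n}\Bigl(v-\frac{1}{n}\mathbf{1}\mathbf{1}^\top v\Bigr)+O(\|v\|^2).
\]
We also use shift invariance, $\sigma(v+c\mathbf{1})=\sigma(v)$, so the same expansion holds around any constant vector $c\mathbf{1}$, with $v$ replaced by $v-c\mathbf{1}$.

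\emph{Step 1 (row softmax).} Apply the expansion to each row of $tX$:
\[
R(tX)=\frac{1}{n}\mathbf{1}\mathbf{1}^\top+\frac{t}{n}\Bigl(X-\frac{1}{n}X\mathbf{1}\mathbf{1}^\top\Bigr)+O(t^2).
\]
Write $Z:=R(tX)=\frac1n\mathbf{1}\mathbf{1}^\top+tM+O(t^2)$ with $M=\frac1n\bigl(X-\frac1n X\mathbf{1}\mathbf{1}^\top\bigr)$.

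\emph{Step 2 (column softmax).} Each column of $Z$ equals the constant vector $\frac1n\mathbf{1}$ plus $tM_{\cdot j}+O(t^2)$. By shift invariance the constant part is irrelevant. Applying the expansion column-wise therefore gives
\[
C(Z)=\frac{1}{n}\mathbf{1}\mathbf{1}^\top+\frac{t}{n}\Bigl(M-\frac{1}{n}\mathbf{1}\mathbf{1}^\top M\Bigr)+O(t^2).
\]
Substituting $M$ yields
\[
\frac{t}{n^2}\Bigl(X-\tfrac1n X\mathbf{1}\mathbf{1}^\top-\tfrac1n\mathbf{1}\mathbf{1}^\top X+\tfrac1{n^2}\mathbf{1}\mathbf{1}^\top X\mathbf{1}\mathbf{1}^\top\Bigr)=\frac{t}{n^2}Q(X),
\]
which is exactly \ref{f-op}, with an $O(t^2)\subset o(t)$ remainder. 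Equivalently, one can say that the Jacobian of $F$ at $0$ is $\frac{1}{n}(I-\frac1n\mathbf{1}\mathbf{1}^\top)\,\cdot\,\frac1n(\cdot)(I-\frac1n\mathbf{1}\mathbf{1}^\top)$, i.e.\ $\frac{1}{n^2}Q$.

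The main obstacle is the bookkeeping of remainders through the composition. The $O(t^2)$ error from Step 1 enters the argument of $C$, so it must be shown to produce only an $O(t^2)$ error in the output. I would handle this by noting that $C$ is $C^\infty$, hence locally Lipschitz near $\frac1n\mathbf{1}\mathbf{1}^\top$, or more cleanly by invoking the chain rule for $F=C\circ R$ at $0$ and computing $DF(0)=DC(R(0))\circ DR(0)$. Here $DC$ at a matrix with constant columns coincides with $DC(0)$ by shift invariance. A minor point to state explicitly is that $C$ is the exponential column softmax applied to the already-normalized $Z$, as in the definition above. This is why the constant $\frac1n$ offset must be removed via shift invariance rather than treated as a small perturbation.
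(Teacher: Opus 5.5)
Your proposal is correct and follows essentially the same route as the paper's proof: the softmax Jacobian at $0$, namely $\frac{1}{n}\bigl(I-\frac{1}{n}\mathbf{1}\mathbf{1}^\top\bigr)$, is applied row-wise to get $R(tX)=\frac{1}{n}\mathbf{1}\mathbf{1}^\top+\frac{t}{n}X\bigl(I-\frac{1}{n}\mathbf{1}\mathbf{1}^\top\bigr)+o(t)$, and shift invariance of the column softmax then lets you expand around the constant columns, yielding $\frac{t}{n^2}Q(X)$. Your explicit control of the remainder through the composition, via smoothness of $C$ or the chain rule with $DC(R(0))=DC(0)$, tightens a step the paper passes over without comment.
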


\begin{proof}
Let $s: \R^n \lra \R^n$ denote the softmax function. Its Jacobian at $u \in \R^n$ is given by:
$$
J_s(u)=\operatorname{diag}(s(u)) - s(u)s(u)^\top
$$
In particular, evaluating at $u = 0$, we obtain:
\[
J_s(0) = \frac{1}{n} I - \frac{1}{n^2} \mathbf{1}\mathbf{1}^\top= \frac{1}{n} U,
\]
where $U := I - \frac{1}{n} \mathbf{1}\mathbf{1}^\top$.

For a one-parameter perturbation $u(t)=0+t v$ with $v\in\mathbb{R}^n$, we have:
\begin{equation}\label{softeq}
s(tv)=s(0)+tJ_s(0)v+o(t)=\frac{1}{n}\mathbf{1}+\frac{t}{n}Uv+o(t).
\end{equation}

For $Y\in \R^{n\times n}$, define:
$$
r_i(Y):=\begin{pmatrix}y_{i1}\\ \vdots \\ y_{in}\end{pmatrix}\in\mathbb{R}^n,
\qquad
c_j(Y):=\begin{pmatrix}y_{1j}\\ \vdots \\ y_{nj}\end{pmatrix}\in\mathbb{R}^n,
$$

From \ref{softeq}, it follows that:
\[
r_i(R(tX))=s(t\,r_i(X))=\frac{1}{n}\mathbf{1}+\frac{t}{n}U\,r_i(X)+o(t), 
\]
which in matrix form can be written as:
\[
R(tX)=\frac{1}{n}\mathbf{1}\mathbf{1}^\top+\frac{t}{n}XU+o(t). 
\]

We now apply the column-softmax operator to $Z(t):=R(tX)$. We can write: 
\[
Z(t)=Z_0+t\Delta+o(t),
\]
where $Z_0=\frac{1}{n}\mathbf{1}\mathbf{1}^\top$ and $\Delta=\frac{1}{n}XU$.

Since each column of $Z_0$ equals $\frac{1}{n}\mathbf{1}$, and since adding a constant to a column does not change softmax, for each column $j$, we have that: 
\[
c_j(C(Z(t)))=\frac{1}{n}\mathbf{1}+\frac{t}{n}U\,c_j(\Delta)+o(t). 
\]
Written in matrix form, this means that: 
\[
C(Z(t))=\frac{1}{n}\mathbf{1}\mathbf{1}^\top+\frac{t}{n}U\Delta+o(t).
\]
We are now ready to compute:
\[
F(tX)=C(R(tX))=C(Z(t))
=
\frac{1}{n}\mathbf{1}\mathbf{1}^\top+\frac{t}{n}U\Delta+o(t)
=
\frac{1}{n}\mathbf{1}\mathbf{1}^\top+\frac{t}{n}U\Bigl(\frac{1}{n}XU\Bigr)+o(t).
\]
Therefore:
\[
F(tX)
=
\frac{1}{n}\mathbf{1}\mathbf{1}^\top+\frac{t}{n^2}UXU+o(t)
=
\frac{1}{n}\mathbf{1}\mathbf{1}^\top+\frac{t}{n^2}Q(X)+o(t),
\]
where we have used the fact that $Q(X)=UXU$.
\end{proof}

\section{Proofs of the main {theoretical} results}\label{app-main}

We report here the proof of the {rank decay} theorems {provided in \cref{theory}}.

\begin{theorem}[Single-head, single-layer]
\label{thm:residual} 
{For a single-head single-layer $\mathrm{SA}_h(X)$,} the residual satisfies:

\[
\boxed{
\|\mathrm{res}(\mathrm{SA}{_h}(X))\|_2
\le
\frac{\lambda\,\beta}{\sqrt{n^{3}d_{qk}}}
\|\mathrm{res}(X)\|_2^3 
}
\]
where $0 < \lambda \leq 1$, $n$ the sequence length, $d_{qk}$ the query and key embedding dimension, and $\|W_Q W_K^\top\|_2\|W_V\|_2 \le \beta$ for some $\beta > 0$.
\end{theorem}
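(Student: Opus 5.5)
I would follow the route set out in the proof sketch of \cref{proof_sketch}. I fix a single head, drop the head index, write $W_{QK}=W_QW_K^\top$ and $R:=\mathrm{res}(X)=X-\mathbf{1}x^\top$, and discard biases as in \ref{equ:SAN}.

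\textbf{Step 1: reduce the score matrix to the residual.} I would show that the score matrix can be replaced by $A:=R\,W_{QK}R^\top/\sqrt{d_{qk}}$. Substituting $X=R+\mathbf{1}x^\top$ into $XW_{QK}X^\top$ gives $A$ plus three extra terms, each of the form $\mathbf{1}u^\top$ or $v\mathbf{1}^\top$, that is, constant-column or constant-row matrices. The Sinkhorn operator is invariant under such additions, because Sinkhorn$(Y)=D_1e^{Y}D_2$ and adding $v\mathbf{1}^\top$ to $Y$ only rescales $D_1$ (similarly for $\mathbf{1}u^\top$ and $D_2$). Uniqueness of the doubly stochastic scaling from \cref{apx:app-sinkhorn} then gives $P=\mathrm{Sinkhorn}(A)$.

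\textbf{Step 2: compute the residual of the output.} Because $P$ is doubly stochastic, $\mathbf{1}^\top PXW_V=\mathbf{1}^\top XW_V$. Hence the row-mean of $\mathrm{SA}_h(X)$ is $W_V^\top x$, and $\mathrm{res}(\mathrm{SA}_h(X))=PXW_V-\mathbf{1}x^\top W_V=(P-\tfrac1n\mathbf{1}\mathbf{1}^\top)RW_V$. Here I used $P\mathbf{1}=\mathbf{1}$ and $\tfrac1n\mathbf{1}\mathbf{1}^\top R=0$. This identity is exact, which is the real gain of using doubly stochastic matrices.

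\textbf{Step 3: linearize and bound.} I would write $A=t\,Y$ and invoke \cref{lem:Q_projection} to replace $P-\tfrac1n\mathbf{1}\mathbf{1}^\top$ by $\tfrac1{n^2}Q(A)$, working, like the paper, at first order and treating the $o(t)$ remainder as negligible. Submultiplicativity then gives
\[
\|\mathrm{res}(\mathrm{SA}_h(X))\|_2\le \tfrac1{n^2}\|Q(A)\|_2\|R\|_2\|W_V\|_2 .
\]
\cref{qprop} gives $\|Q(A)\|_2\le\lambda\sqrt{\mathrm{rank}(A)}\,\|A\|_2\le\lambda\sqrt n\,\|R\|_2^2\|W_{QK}\|_2/\sqrt{d_{qk}}$, using $\mathrm{rank}(A)\le n$ and $\|A\|_2\le\|R\|_2^2\|W_{QK}\|_2/\sqrt{d_{qk}}$. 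Combining these with $\|W_{QK}\|_2\|W_V\|_2\le\beta$ yields the constant $\lambda\beta\sqrt n/(n^2\sqrt{d_{qk}})=\lambda\beta/\sqrt{n^3d_{qk}}$ and the cubic power of $\|R\|_2$.

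\textbf{Main obstacle.} The hard part is Step 3, because \cref{lem:Q_projection} is only asymptotic. A rigorous inequality needs control of the remainder, for instance a second-order Taylor bound of the form $\|P-\tfrac1n\mathbf{1}\mathbf{1}^\top-\tfrac1{n^2}Q(A)\|\le c\|A\|^2$. Such a bound would only deliver the stated estimate in the small-score regime, with $\lambda$ absorbing the slack. I would first attempt the bound by differentiating the Sinkhorn fixed-point equations implicitly. Failing that, I would state the result, as the paper does, as a first-order estimate.
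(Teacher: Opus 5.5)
Your argument is the paper's proof step for step: shift invariance to reduce the scores to $A$, the exact identity $\mathrm{res}(\mathrm{SA}_h(X))=(P-\frac1n\mathbf{1}\mathbf{1}^\top)RW_V$, linearization via $Q$, then \cref{qprop} with $\mathrm{rank}(A)\le n$. You are also right that the $o(t)$ remainder is never controlled there. But the step that actually fails is one you import, namely the coefficient in \cref{lem:Q_projection}. Write $\mathrm{Sinkhorn}(tY)_{ij}=a_i(t)b_j(t)e^{tY_{ij}}$ with $a_i(0)b_j(0)=\frac1n$. Differentiating at $t=0$ gives $\frac1n(Y+u\mathbf{1}^\top+\mathbf{1}v^\top)$ for some $u,v\in\R^n$, and this must lie in $\TDS$, so it equals $\frac1nQ(Y)$. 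Hence $\mathrm{Sinkhorn}(tY)=\frac1n\mathbf{1}\mathbf{1}^\top+\frac tnQ(Y)+o(t)$, not $\frac t{n^2}Q(Y)$. The extra $\frac1n$ in \cref{sink_approx} appears because a column \emph{softmax} is applied to the already row-normalized matrix, i.e.\ probabilities are exponentiated a second time. Sinkhorn's column step is a plain division by column sums. As a check, for $n=2$ and $Y=E_{11}$ one has $\mathrm{Sinkhorn}(tY)_{11}=e^{t/2}/(1+e^{t/2})=\frac12+\frac t8+O(t^3)$, while $Q(Y)_{11}=\frac14$.

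This breaks the boxed constant, not just the rigor. Since $\mathbf{1}^\top R=0$, your $A$ already lies in $\TDS$, so $Q(A)=A$. The detour through \cref{qprop} and $\mathrm{rank}(A)\le n$ therefore only costs a factor $\sqrt n$, and $\lambda$ is forced to equal $1$, so it cannot absorb any slack. In the paper's chain, the spurious $\frac1n$ and this $\sqrt n$ loss partially cancel. With the correct coefficient your chain gives, at first order, $\|\mathrm{res}(\mathrm{SA}_h(X))\|_2\le\frac1n\|A\|_2\|R\|_2\|W_V\|_2\le\frac{\beta}{n\sqrt{d_{qk}}}\|R\|_2^3$. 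That constant is $\sqrt n$ larger than the claimed one, and it is attained. Take $n=2$, $d=d_{qk}=1$, $W_Q=W_K=W_V=1$ (so $\beta=1$) and $X=(a,-a)^\top$. Then $P$ has diagonal entries $e^{a^2}/(e^{a^2}+e^{-a^2})$, and exactly $\|\mathrm{res}(\mathrm{SA}_h(X))\|_2=\sqrt2\,a\tanh(a^2)\approx\sqrt2\,a^3$, whereas the claimed right-hand side is $\lambda a^3\le a^3$. So the stated inequality fails for small residuals. The small-score regime, where you hoped $\lambda$ would absorb the remainder, is exactly where it breaks, and no remainder estimate can rescue Step 3. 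What your route can honestly deliver is $\|\mathrm{res}(\mathrm{SA}_h(X))\|_2\le\frac{\beta}{n\sqrt{d_{qk}}}\|\mathrm{res}(X)\|_2^3+O(\|\mathrm{res}(X)\|_2^5)$. This is the same cubic mechanism, with $\sqrt{n^2d_{qk}}$ in place of $\sqrt{n^3d_{qk}}$.
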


\begin{proof}

From \ref{equ:sink_att}, the unscaled attention scores for a single head (omitting the head index $h$ for simplicity) are computed as:
\begin{equation}
(X W_Q + \mathbf{1} b_Q^\top)(X W_K + \mathbf{1} b_K^\top)^\top
\label{eq:unscaled_attention}
\end{equation}

with $W_{Q}, W_{K} \in \mathbb{R}^{d \times d_{qk}}$ query and key weight matrices.

Since the Sinkhorn operator produces a doubly stochastic matrix, all terms in~\ref{eq:unscaled_attention} that provide a constant contribution across rows or columns can be safely removed because of shift-invariance. This yields:
\begin{equation}
P((X W_Q + \mathbf{1} b_Q^\top)(X W_K + \mathbf{1} b_K^\top)^\top) = P(X W_{QK} X^\top),
\label{eq:pruned_attention_sinkhorn}
\end{equation}
where $W_{QK} = W_Q W_K^\top$ and $P$ denotes the Sinkhorn operator.

We use the shorthand notation $R := \mathrm{res}(X)$, where $\mathrm{res}(X)$ is the residual matrix defined in \ref{equ:residual}, so that $X = \mathbf{1} x^\top + R$. Substituting this into~\ref{eq:pruned_attention_sinkhorn} gives:
\begin{align}
P\left((\mathbf{1} x^\top + R)\,\frac{W_{QK}}{\sqrt{d_{qk}}}\,
(\mathbf{1} x^\top + R)^\top\right) \nonumber &= P\left(\frac{x^\top W_{QK} x}{\sqrt{d_{qk}}}\,\mathbf{1}\mathbf{1}^\top
+ R \frac{W_{QK}}{\sqrt{d_{qk}}} x \mathbf{1}^\top
+ \mathbf{1} x^\top \frac{W_{QK}}{\sqrt{d_{qk}}} R^\top
+ R \frac{W_{QK}}{\sqrt{d_{qk}}} R^\top\right)\\ 
&=P(A), \qquad A:=R \frac{W_{QK}}{\sqrt{d_{qk}}} R^\top\label{eq:A_expansion}
\end{align}
where the last equality follows from the shift-invariance of the Sinkhorn operator.

We now approximate $P(A)$ by \cref{sink_approx} in \cref{app-qsink}: 

\[
P(A)X \approx \left[\frac{1}{n}\mathbf{1}\mathbf{1}^\top+\frac{1}{n^2}Q(A)\right]X=
\mathbf{1}x^\top + \frac{1}{n^2}Q(A)(X - \mathbf{1}x^\top)
= \mathbf{1}x^\top +\frac{1}{n^2} Q(A)R
\]
Since $\frac{1}{n}\mathbf{1}\mathbf{1}^\top X=\mathbf{1}x^\top$ and $Q(A)\mathbf{1}x^\top=0$.

Multiplying by $W_V$, and writing $P$ in place of $P(A)$ to ease the notation, we obtain:
\[
PXW_V - \mathbf{1}x^\top W_V \approx \frac{1}{n^2}Q(A)\,R W_V
\]

Applying the submultiplicativity of the spectral norm yields:
\[
\|PXW_V - \mathbf{1}x^\top W_V\|_2
\le
\frac{1}{n^2}\|Q(A)\|_2\,\|R\|_2\,\|W_V\|_2
\]
By \cref{qprop}, we have:
\[
\|Q(A)\|_2
\le
\lambda \sqrt{\mathrm{rank}(A)}\,\|A\|_2
\]
Combining the above inequalities gives the bound:
\[
\|PXW_V - \mathbf{1}x^\top W_V\|_2
\le
\frac{\lambda \sqrt{\mathrm{rank}(A)}}{n^2}\,
\|A\|_2\,\|R\|_2\,\|W_V\|_2
\]

{Since $\mathrm{rank}(A)$ is at most equal to $n$, we can rewrite this as:}

{\[
\|PXW_V - \mathbf{1}x^\top W_V\|_2
\le
\frac{\lambda}{n^{3/2}}\,
\|A\|_2\,\|R\|_2\,\|W_V\|_2
\]}

Since by definition
$
A
=
\frac{R W_{QK} R^\top}{\sqrt{d_{qk}}}
$,
by submultiplicativity of the spectral norm we obtain:
\[
\|A\|_2
=
\left\|
\frac{R W_{QK} R^\top}{\sqrt{d_{qk}}}
\right\|_2
\le
\frac{1}{\sqrt{d_{qk}}}
\|R\|_2^2
\|W_{QK}\|_2
\]

Substituting this bound into the previous inequality yields:
\begin{equation}\label{bound}
\|PXW_V - \mathbf{1} x^\top W_V\|_2
\le
\frac{\lambda}{\sqrt{n^{3}d_{qk}}}
\|W_{QK}\|_2
\|W_V\|_2
\|R\|_2^3
\end{equation}

Following the definition of residual in \cref{prelim}, we compute the residual of the single-head single-layer output $\mathrm{SA}_h(X)$:
\begin{equation}\label{equ:residual_sa}
\mathrm{res}(\mathrm{SA}_h(X)) := \mathrm{SA}_h(X) - \mathbf{1} x_{\mathrm{SA}_h(X)}^\top,
\quad
x_{\mathrm{SA}_h(X)} := \frac{1}{n} \mathrm{SA}_h(X)^\top \mathbf{1}
\end{equation}

Substituting the $\mathrm{SA}_h(X)$ expression from \ref{sah}, and omitting the head index $h$ for simplicity, we obtain:
\begin{equation}\label{equ:residual_sa_w}
\mathrm{res}(\mathrm{SA}{_h}(X))
=
P X W_V - \frac{1}{n}\mathbf{1}\mathbf{1}^\top P X W_V,
\end{equation}
where the bias term is omitted without loss of generality.

Since $P$ is doubly stochastic, $\mathbf{1}^\top P = \mathbf{1}^\top$, and therefore:
\[
\mathrm{res}(\mathrm{SA}{_h}(X))
=
PXW_V - \mathbf{1} x^\top W_V
\]

Taking the spectral norm and using the previous bound in \ref{bound} gives: 
\[
\|\mathrm{res}(\mathrm{SA}{_h}(X))\|_2
=
\|PXW_V - \mathbf{1} x^\top W_V\|_2
\le
\frac{\lambda}{\sqrt{n^{3}d_{qk}}}
\|W_{QK}\|_2
\|W_V\|_2
\|\mathrm{res}(X)\|_2^3
\]
\end{proof}

\begin{theorem}[Single-head, multi-layer]
\label{thm:residual_multilayer}
{For any single-head $\mathrm{SAN}{_h}(X)$ consisting of $L$ layers, with $\|W_Q W_K^\top\|_2\|W_V\|_2 \le \beta$ for every layer $\ell \in \{1,\dots,L\}$, the residual satisfies}:
\[
\boxed{
\|\mathrm{res}(\mathrm{SAN}{_h}(X))\|_2
\le
\left(
\frac{\lambda\, \beta}{\sqrt{n^{3}d_{qk}}}
\right)^{\frac{3^L-1}{2}}
\|\mathrm{res}(X)\|_2^{3^L}
}
\]
and hence converges to zero at a doubly exponential rate in the depth $L$.
\end{theorem}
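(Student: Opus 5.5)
The plan is to prove this by induction on the number of layers $L$, using \cref{thm:residual} as the one-step estimate. Write $X_0 := X$ and $X_\ell := \mathrm{SA}^{\ell}(X_{\ell-1})$ for the output of the $\ell$-th single-head layer, so that $\mathrm{SAN}_h(X) = X_L$. Set $a := \frac{\lambda\beta}{\sqrt{n^3 d_{qk}}}$. Applying \cref{thm:residual} to the input $X_{\ell-1}$ gives
\[
\|\mathrm{res}(X_\ell)\|_2 \le a\,\|\mathrm{res}(X_{\ell-1})\|_2^3 .
\]
This requires the hypothesis $\|W_Q W_K^\top\|_2\|W_V\|_2\le\beta$ to hold at every layer, which it does by assumption.

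The induction hypothesis is
\[
\|\mathrm{res}(X_\ell)\|_2 \le a^{(3^\ell-1)/2}\|\mathrm{res}(X)\|_2^{3^\ell}.
\]
The case $\ell=0$ is trivial. For the step, substitute the hypothesis for $\ell-1$ into the one-step recursion and use monotonicity of $t\mapsto t^3$ on $t\ge 0$. This gives
\[
\|\mathrm{res}(X_\ell)\|_2 \le a\cdot a^{3(3^{\ell-1}-1)/2}\|\mathrm{res}(X)\|_2^{3^\ell}.
\]
The exponent of $a$ is $1+\frac{3^\ell-3}{2}=\frac{3^\ell-1}{2}$, which closes the induction. For the doubly exponential convergence, rewrite the bound as
\[
a^{-1/2}\bigl(a^{1/2}\|\mathrm{res}(X)\|_2\bigr)^{3^L}.
\]
This tends to zero doubly exponentially whenever $a^{1/2}\|\mathrm{res}(X)\|_2<1$. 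I would state this smallness condition explicitly as the regime in which the word ``converges'' applies.

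The main obstacle is making the single-layer estimate legitimately iterable. First, \cref{thm:residual} rests on the first-order approximation of \cref{lem:Q_projection}, so the ``$\approx$'' has to be controlled uniformly across layers. I would handle this by assuming, as the single-layer proof implicitly does, that the one-step inequality holds exactly for each input $X_{\ell-1}$, so the induction only composes inequalities. Second, I need to check that the output of each layer is again an admissible input. Biases are dropped without loss of generality, and $\mathrm{res}$ is defined for any $n\times d$ matrix, so the only requirement is that $d_{qk}$, $n$ and $\lambda$ are the same at each layer. If $\lambda$ depends on the argument $A_\ell$ through \cref{qprop}, I would take $\lambda:=\max_\ell\lambda_\ell\le 1$, which preserves every inequality.
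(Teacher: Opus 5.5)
Your proposal is correct and is essentially the paper's argument. The paper also just iterates the single-layer bound of \cref{thm:residual}: it unfolds $\|\mathrm{res}(X^{L})\|_2 \le a\,\|\mathrm{res}(X^{L-1})\|_2^3 \le \cdots$ backwards and sums $\sum_{l=1}^{L}3^{l-1}=\frac{3^L-1}{2}$ to get the exponent of your $a$, which is your forward induction written out; your extra remarks (treating the one-step inequality as exact, taking a uniform $\lambda\le 1$, and requiring $a^{1/2}\|\mathrm{res}(X)\|_2<1$ for actual convergence to zero) are sound refinements that the paper leaves implicit.
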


\begin{proof}

We consider $L$ layers of the form $X^{l} = \mathrm{SA}{_h}^{l}\!\left(X^{l-1}\right)$ and we unfold the recursion backwards from the last layer to the first one, by using the bound on $\mathrm{SA}{_h}(X)$ in \cref{thm:residual}:

\begin{align}
\|\mathrm{res}(\mathrm{SAN}(X))\|_2 = \|\mathrm{res}(X)^{L}\|_2 \le 
\frac{\lambda\,\beta}{\sqrt{n^{3}d_{qk}}}\|\mathrm{res}(X)^{L-1}\|_2^3 
\le \frac{\lambda\,\beta}{\sqrt{n^{3}d_{qk}}} \biggl(\frac{\lambda\,\beta}{\sqrt{n^{3}d_{qk}}} \|\mathrm{res}(X)^{L-2}\|_2^3\biggr)^3\\
= \frac{\lambda\,\beta}{\sqrt{n^{3}d_{qk}}} \biggl(\frac{\lambda\,\beta}{\sqrt{n^{3}d_{qk}}}\biggr)^3\|\mathrm{res}(X)^{L-2}\|_2^{3^2}
\le \prod_{l=1}^{L} \biggl(\frac{\lambda\,\beta}{\sqrt{n^{3}d_{qk}}}\biggr)^{3^{l-1}} \|\mathrm{res}(X)\|_2^{3^L}\\
= \left(
\frac{\lambda\, \beta}{\sqrt{n^{3}d_{qk}}}
\right)^{\frac{3^L-1}{2}}
\|\mathrm{res}(X)\|_2^{3^L}
\end{align}
where:
\[
\prod_{l=1}^{L} \biggl(\frac{\lambda\,\beta}{\sqrt{n^{3}d_{qk}}}\biggr)^{3^{l-1}} = \biggl(\frac{\lambda\,\beta}{\sqrt{n^{3}d_{qk}}}\biggr)^{\sum_{l=1}^L 3^{l-1}}=
\left(
\frac{\lambda\, \beta}{\sqrt{n^{3}d_{qk}}}
\right)^{\frac{3^L-1}{2}}
\]
by the geometric series sum. 
\end{proof}

\begin{theorem}[Multi-head, single-layer]
\label{thm:residual_multihead}

For any single-layer $\mathrm{SA}(X)$ consisting of $H$ heads, with $\|W_Q W_K^\top\|_2\|W_h\|_2 \le \beta$ for every head $h \in \{1,\dots,H\}$, the residual satisfies:

\[
\boxed{
\|\mathrm{res}(\mathrm{SA}(X))\|_2
\le
\frac{\lambda\,\beta H}{\sqrt{n^{3}d_{qk}}}
\|\mathrm{res}(X)\|_2^3 
}
\]

\end{theorem}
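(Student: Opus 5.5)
The plan is to reduce the multi-head single-layer case to \cref{thm:residual} through linearity of the residual map and the triangle inequality for the spectral norm.

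First I write the single-layer multi-head output, ignoring biases as in \ref{equ:SAN} with $L=1$, as
\[
\mathrm{SA}(X)=\sum_{h=1}^{H} P_h X W_h, \qquad W_h = W_{V,h}W_{O,h}^\top,
\]
where each $P_h = \mathrm{Sinkhorn}\bigl(X W_{QK,h} X^\top/\sqrt{d_{qk}}\bigr)$ is doubly stochastic. The residual map $Y\mapsto Y-\frac{1}{n}\mathbf{1}\mathbf{1}^\top Y$ is linear. Hence
\[
\mathrm{res}(\mathrm{SA}(X))=\sum_{h=1}^H \mathrm{res}(P_h X W_h).
\]
Since $\mathbf{1}^\top P_h=\mathbf{1}^\top$, each summand equals $P_hXW_h-\mathbf{1}x^\top W_h$. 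This is exactly the quantity bounded in the single-head proof, with $W_V$ replaced by $W_h$.

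Next I repeat the argument of \cref{thm:residual} verbatim for each head. I use shift-invariance to replace the score matrix by $A_h=R W_{QK,h}R^\top/\sqrt{d_{qk}}$ with $R=\mathrm{res}(X)$. I then use the first-order approximation from \cref{lem:Q_projection}, and finally \cref{qprop} together with $\mathrm{rank}(A_h)\le n$. This gives
\[
\|P_hXW_h-\mathbf{1}x^\top W_h\|_2\le \frac{\lambda}{\sqrt{n^3 d_{qk}}}\|W_{QK,h}\|_2\|W_h\|_2\|R\|_2^3\le \frac{\lambda\beta}{\sqrt{n^3 d_{qk}}}\|R\|_2^3 ,
\]
where the last step uses the hypothesis $\|W_QW_K^\top\|_2\|W_h\|_2\le\beta$ for every head.

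Finally, the triangle inequality for $\|\cdot\|_2$ over the $H$ summands yields the factor $H$ and the boxed bound. The one point needing care is the constant $\lambda$. \cref{qprop} provides a $\lambda$ that a priori depends on the argument $A_h$, so a single uniform constant is needed. I would take $\lambda$ to be the maximum of the per-head constants, which is still at most $1$, so that all heads share it.

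As in the single-head case, the bound holds only to the same first-order accuracy, with the $o(t)$ remainder from \cref{lem:Q_projection} neglected. I would inherit that approximation and not attempt to control it here. Modulo that approximation, the linearity-plus-triangle-inequality step is routine, so the uniform choice of $\lambda$ is the only real obstacle.
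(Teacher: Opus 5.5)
This is the paper's proof: you write $\mathrm{SA}(X)=\sum_h P_hXW_h$, use $\mathbf{1}^\top P_h=\mathbf{1}^\top$ to split $\mathrm{res}(\mathrm{SA}(X))$ into the per-head terms $P_hXW_h-\mathbf{1}x^\top W_h$, and combine the triangle inequality with the single-head bound of \cref{th:residual} (with $W_h$ in place of $W_V$), so your argument is exactly as rigorous as that theorem, including the neglected first-order remainder you rightly flag. Your uniform-$\lambda$ concern is not a real obstacle: $Q$ is an orthogonal projection, so $\|Q(Y)\|_F\le\|Y\|_F$ for every $Y$, and $\lambda=1$ works for all heads simultaneously.
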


\begin{proof}

The output of a multi-head attention layer is: $\mathrm{SA}(X) = \sum_{h \in [H]} \mathrm{SA}_h(X) = \sum_{h \in [H]} P_h X W_h$, where $W_h = W_{h,V} W_{O,h}^\top$.
The residual of the attention output $\mathrm{SA}(X)$ is then:
\[
\mathrm{res}(\mathrm{SA}(X)) = \mathrm{res}\bigl(\sum_{h \in [H]} \mathrm{SA}_h(X)\bigr) = \sum_{h \in [H]} \mathrm{SA}_h(X) - 
\mathbf{1} x^\top_{\mathrm{SA}(X)}
\]
where:
\[
x^\top_{\mathrm{SA}(X)}
=
\frac{1}{n}\mathbf{1}^\top \bigl(\sum_{h \in [H]} P_h X W_h\bigr) = \frac{1}{n}\sum_{h \in [H]} \mathbf{1}^\top P_h X W_h=
\frac{1}{n}\sum_{h \in [H]} \mathbf{1}^\top X W_h=\sum_{h \in [H]}x^\top W_h
\]
since $P$ is doubly stochastic and $\mathbf{1}^\top P_h = \mathbf{1}^\top$. 

Thus the residual becomes:

\[
\mathrm{res}(\mathrm{SA}(X)) = \sum_{h \in [H]} \bigl(\mathrm{SA}_h(X) - \mathbf{1}x^\top W_h\bigr)
\]

For the subadditivity of the spectral norm we have:

\[\|\sum_{h \in [H]} \bigl(\mathrm{SA}_h(X) - \mathbf{1}x^\top W_h\bigr)\|_2 \le  \sum_{h \in [H]} \|\bigl(\mathrm{SA}_h(X) - \mathbf{1}x^\top W_h\bigr)\|_2 = \sum_{h \in [H]} \|\mathrm{res}(\mathrm{SA}_h(X))\|_2\]

Since from \cref{thm:residual} each $\mathrm{SA}_h(X))$ satisifes $\|\mathrm{res}(\mathrm{SA}{_h}(X))\|_2\le\frac{\lambda\,\beta}{\sqrt{n^{3}d_{qk}}}\|\mathrm{res}(X)\|_2^3$, then:

\[\|\mathrm{res}(\mathrm{SA}(X))\|_2 \le \frac{\lambda\,\beta H}{\sqrt{n^{3}d_{qk}}}\|\mathrm{res}(X)\|_2^3\]
\end{proof}

\begin{theorem}[Multi-head, multi-layer]
For any multi-head $\mathrm{SAN}(X)$ consisting of $H$ heads and $L$ layers, with $\|W_Q W_K^\top\|_2\|W_h\|_2 \le \beta$ for every head $h \in \{1,\dots,H\}$ and every layer $\ell \in \{1,\dots,L\}$, the residual satisfies:
\[
\boxed{
\|\mathrm{res}(\mathrm{SAN}(X))\|_2
\le
\left(
\frac{\lambda\, \beta H}{\sqrt{n^{3}d_{qk}}}
\right)^{\frac{3^L-1}{2}}
\|\mathrm{res}(X)\|_2^{3^L}
}
\]

\end{theorem}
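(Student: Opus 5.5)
The plan is to combine the two results already established: the multi-head single-layer bound (\cref{thm:residual_multihead}) and the unfolding argument used for the single-head multi-layer case (\cref{thm:residual_multilayer}). Write the network layer by layer as $X^{0}=X$ and $X^{\ell}=\mathrm{SA}^{\ell}(X^{\ell-1})=\sum_{h\in[H]}P^{\ell}_hX^{\ell-1}W^{\ell}_h$. Expanding this recursion gives exactly the path decomposition \ref{equ:SAN}, so $\mathrm{SAN}(X)=X^{L}$. Set
\[
\gamma:=\frac{\lambda\,\beta H}{\sqrt{n^{3}d_{qk}}}.
\]
The hypothesis $\|W_QW_K^\top\|_2\|W_h\|_2\le\beta$ holds uniformly over heads and layers. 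Hence \cref{thm:residual_multihead}, applied at layer $\ell$ with input $X^{\ell-1}$, gives the one-step inequality
\[
\|\mathrm{res}(X^{\ell})\|_2\le\gamma\,\|\mathrm{res}(X^{\ell-1})\|_2^{3}.
\]
The constant is the same at every layer. The input to layer $\ell$ is just some matrix in $\R^{n\times d_\ell}$, and the single-layer theorem was stated for arbitrary input, so there is nothing extra to check.

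Next, unfold the recursion by induction on $L$. The claim to prove is
\[
\|\mathrm{res}(X^{L})\|_2\le\gamma^{(3^{L}-1)/2}\,\|\mathrm{res}(X)\|_2^{3^{L}}.
\]
For $L=1$ this is \cref{thm:residual_multihead}. For the inductive step, note that $t\mapsto\gamma t^{3}$ is monotone increasing on $t\ge 0$, so substituting the inductive bound for $\|\mathrm{res}(X^{L-1})\|_2$ is legitimate. This gives the exponent of $\gamma$ as $1+3\cdot\frac{3^{L-1}-1}{2}=\frac{3^{L}-1}{2}$, and the exponent of $\|\mathrm{res}(X)\|_2$ as $3\cdot3^{L-1}=3^{L}$. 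Equivalently, as in the proof of \cref{thm:residual_multilayer}, the product $\prod_{l=1}^{L}\gamma^{3^{l-1}}$ collapses via the geometric sum $\sum_{l=1}^{L}3^{l-1}=\frac{3^L-1}{2}$.

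The step most likely to need care is making sure the single-layer estimate really applies at every layer with the same $\lambda$. In \cref{qprop}, $\lambda$ was obtained as a contraction constant that a priori depends on the matrix $A$, and here $A$ changes from layer to layer. I would handle this by taking $\lambda\in(0,1]$ to be a uniform constant. Choosing $\lambda=1$ always works, since $Q$ is an orthogonal projection and so $\|Q(Y)\|_F\le\|Y\|_F$; alternatively one can take the supremum of the layerwise constants, which is still at most $1$. With that choice, the first-order Sinkhorn approximation from \cref{lem:Q_projection} is invoked at each layer in the same way as in the single-layer theorem, and the induction goes through verbatim.
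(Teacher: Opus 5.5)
\textbf{The induction is fine; the step it rests on is false.} Your unfolding argument is exactly the paper's proof: apply \cref{thm:residual_multihead} at each layer and sum the exponents as in \cref{thm:residual_multilayer}. As a deduction from the one-step inequality it is sound. Your treatment of $\lambda$ is more careful than the paper's. In fact $\lambda=1$ is forced: $\mathbf{1}^\top R=0$ puts $A=RW_{QK}R^\top/\sqrt{d_{qk}}$ in $\TDS$, so $Q(A)=A$.

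The gap is the step you dismiss with \emph{nothing extra to check}. The one-step inequality $\|\mathrm{res}(X^{\ell})\|_2\le\gamma\|\mathrm{res}(X^{\ell-1})\|_2^3$ does not hold. So the induction (yours and the paper's) does not establish the bound, and the boxed bound is in fact false. Two things go wrong in the single-layer argument:
\begin{enumerate}
\item It replaces $\mathrm{Sinkhorn}(A)$ by its linearization and then asserts an inequality. The discarded remainder is controlled by \cref{lem:Q_projection} only as $o(t)$ along rays $tY$ with $t\to0$, while $A$ is an arbitrary matrix.
\item The linearization constant is wrong. Sinkhorn rescales $e^{tY}$ by positive diagonal matrices, so its first-order term lies in $\frac1n Y+\{a\mathbf{1}^\top+\mathbf{1}b^\top : a,b\in\R^n\}$ and also in $\TDS$. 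This forces $\mathrm{Sinkhorn}(tY)=\frac1n\mathbf{1}\mathbf{1}^\top+\frac tn Q(Y)+o(t)$. The extra $\frac1n$ in \cref{sink_approx} comes from applying the column softmax to the probabilities $R(tY)$, which exponentiates them a second time, instead of to $\log R(tY)$.
\end{enumerate}

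\textbf{Counterexample.} Take $H=L=1$, $n=2$, $d=d_{qk}=d_v=1$, all weights equal to $1$ (so $\beta=1$), and $X=(\epsilon,-\epsilon)^\top$. Then:
\begin{itemize}
\item $\mathrm{res}(X)=X$ and $\|\mathrm{res}(X)\|_2=\sqrt2\,\epsilon$.
\item The scores are $\epsilon^2\begin{pmatrix}1&-1\\-1&1\end{pmatrix}$, and Sinkhorn returns $P=\begin{pmatrix}p&1-p\\1-p&p\end{pmatrix}$ with $2p-1=\tanh(\epsilon^2)$.
\item Hence $\|\mathrm{res}(\mathrm{SAN}(X))\|_2=\sqrt2\,\epsilon\tanh(\epsilon^2)$, while the right-hand side equals $\lambda\epsilon^3\le\epsilon^3$.
\end{itemize}
The bound fails for every $0<\epsilon\le1$, even to leading order: the ratio of the two sides tends to $\sqrt2/\lambda\ge\sqrt n$ as $\epsilon\to0$. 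Stacking the same layer $L$ times, the ratio tends to $(\sqrt2/\lambda)^{(3^L-1)/2}$.

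\textbf{What can be salvaged.} Your induction can legitimately deliver a local result. From $\mathrm{res}(\mathrm{SA}_h(X))=(P_h-\frac1n\mathbf{1}\mathbf{1}^\top)RW_h$, the corrected linearization and smoothness of Sinkhorn at $0$ give
$\|\mathrm{res}(\mathrm{SA}(X))\|_2\le\frac{\beta H}{n\sqrt{d_{qk}}}\|\mathrm{res}(X)\|_2^3\bigl(1+O(\|\mathrm{res}(X)\|_2^2)\bigr)$.
The $O$-constant depends on $n$, $d_{qk}$ and on a separate bound for $\|W_QW_K^\top\|_2$. For inputs with small enough residual, this smallness propagates through the layers. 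Your unfolding then yields doubly exponential decay with $\gamma$ replaced by slightly more than $\frac{\beta H}{n\sqrt{d_{qk}}}$, not by the paper's constant.
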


\begin{proof}
The proof follows recursively as in \cref{thm:residual_multilayer}, this time using the multi-head residual formula obtained in \cref{thm:residual_multihead}.
\end{proof}

\section{Relationship between $\ell_2$ and $\ell_{1,\infty}$}\label{apx:rel_norm}

We want to derive the relationship between the $\ell_2$ norm and the $\ell_{1,\infty}$ norm defined in \cite{dong2021attention}.

For any matrix \( A \in \mathbb{R}^{n \times n} \), the $\ell_2$ norm is defined as:
\[
\|A\|_2 := \sup \{\|Ax\|_{2} : x \in \mathbb{R}^{n} \text{ such that } \|x\|_{2} \le 1\},
\]

while the $\ell_1$ and $\ell_\infty$ norms of a matrix are defined as:
\[
\|A\|_1 := \max_{1 \le j \le n} \sum_{i=1}^{n} |a_{ij}|, 
\qquad
\|A\|_\infty := \max_{1 \le i \le n} \sum_{j=1}^{n} |a_{ij}|.
\]

Following \cite{dong2021attention}, the $\ell_{1,\infty}$ norm is defined as:
\[
\|A\|_{1,\infty} := \sqrt{\|A\|_1 \, \|A\|_\infty}
\]

We now show that the spectral norm $\ell_2$ is bounded by this quantity.  

First, recall that the spectral norm can be expressed as the square root of the largest eigenvalue of $A^\top A$, namely:
\begin{equation}\label{spectral}
\|A\|_2 = \sqrt{\rho(A^\top A)},
\end{equation}
where $\rho(\cdot)$ denotes the spectral radius.

For any matrix \(B\), the spectral radius satisfies \(\rho(B) \le \|B\|_1\).  

Squaring \ref{spectral} and applying this inequality to \(B = A^\top A\) gives:
\[
\|A\|_2^2 = \rho(A^\top A) \le \|A^\top A\|_1
\]

Next, using the submultiplicativity of the $\ell_1$ norm, we obtain:
\[
\|A\|_2^2 \le \|A^\top A\|_1 \le \|A^\top\|_1 \, \|A\|_1
\]

Since \(\|A^\top\|_1 = \|A\|_\infty\), it follows that:
\[
\|A\|_2^2 \le \|A^\top A\|_1 \le \|A\|_\infty \, \|A\|_1
\]

Taking the square root on both sides yields:
\[
\|A\|_2 \le \sqrt{\|A\|_1 \, \|A\|_\infty} = \|A\|_{1,\infty}
\]

\section{Experimental details} \label{expe-details}

{We report the main implementation details for the text and image classification experiments in \cref{experiments}. Across all experiments, we compare standard row-stochastic attention, obtained by applying the Softmax operator row-wise, with doubly stochastic attention, obtained by iteratively applying Sinkhorn algorithm (see \cref{apx:app-sinkhorn}). Empirically, we observe that the attention matrices become effectively doubly stochastic after $50$ iterations of the Sinkhorn algorithm, and this value is used throughout the experiments. All models are trained with Adam optimizer \cite{adam} and cross-entropy loss on the number of classes.}

\paragraph{Text classification.}
{For text classification, we employ the AG’s News dataset \cite{ag_news}, which contains news articles grouped into four categories: ``World'', ``Sports'', ``Business'', and ``Science/Technology''. Each input example is a sequence of tokens representing the article text. The tokenized sequences are processed by a Transformer encoder composed of stacked blocks, each consisting of a self-attention layer followed by a feed-forward layer, with skip connections and layer normalizations in between. The final prediction is obtained by applying max pooling over the sequence length, followed by a linear classification layer to predict the article category. All hyperparameters are grouped in \cref{tab:text_hparams}.}

\begin{table}[th]
\centering
\caption{Hyperparameters for the text classification experiment. The query and key dimension $d_{qk}$, and the value dimension $d_v$, are derived from the model dimension $d_{\text{model}}$ as $d_{qk} = d_v = d_{\text{model}} / H$.}
\label{tab:text_hparams}
\begin{tabular}{lc}
\toprule
Dataset & AG’s News \\
\midrule
Maximum sequence length & 128 \\
Model dimension & 128 \\
Feed-forward dimension & 512 \\
Attention heads $H$ & 4 \\
Number of layers & 24 \\
Batch size & 32 \\
Learning rate & $5\times10^{-5}$ \\
Dropout & 0.3 \\
Epochs & 20 \\
\bottomrule
\end{tabular}
\end{table}

\paragraph{Image classification.}
For image classification, we consider two datasets: MNIST \cite{mnist} and Cats and Dogs \cite{dogs_vs_cats}. MNIST contains grayscale images of handwritten digits, while Cats and Dogs is a binary classification dataset of RGB images depicting cats and dogs. Each image is divided into non-overlapping patches, which are flattened and linearly projected into token embeddings of dimension equal to the model dimension (see \cref{tab:image_hparams}). The resulting sequence of tokens is processed by a Vision Transformer (ViT) encoder composed of stacked blocks, each consisting of a self-attention layer followed by a feed-forward network, with residual connections and layer normalization. A special learnable classification token (CLS) is prepended to the sequence and interacts with all patch tokens through self-attention, enabling it to aggregate global information about the image. The final prediction is obtained by feeding the representation of this CLS token into a linear classification layer to predict the image category. All hyperparameters are grouped in \cref{tab:image_hparams}.

\begin{table}[thb]
\centering
\caption{Hyperparameters for the image classification experiments. The query and key dimension $d_{qk}$, and the value dimension $d_v$, are derived from the model dimension $d_{\text{model}}$ as $d_{qk} = d_v = d_{\text{model}} / H$.}
\label{tab:image_hparams}
\begin{tabular}{lcc}
\toprule
 & MNIST & Cats and Dogs \\
\midrule
Image size & $28\times28$ & $224\times224$ \\
Patch size & 4 & 16 \\
Model dimension & 128 & 128 \\
Feed-forward dimension & 512 & 512 \\
Attention heads $H$ & 4 & 4 \\
Number of layers & 24 & 24\\
Batch size & 100 & 32 \\
Learning rate & $2\times10^{-2}$ & $3\times10^{-5}$ \\
Dropout & 0 & 0\\
Epochs & 50 & 50 \\
\bottomrule
\end{tabular}
\end{table}

\section{Rank decay in random stochastic matrix products}
\label{chained_random}

{To further investigate rank decay when comparing row-stochastic and doubly stochastic normalization, we measure the residual in \ref{equ:path_rank} for products of randomly generated matrices that are subsequently normalized using either Softmax or Sinkhorn. The goal of this experiment is to assess whether the behavior observed in \cref{fig:path_ag,fig:path_mnist,fig:path_catsdogs} for attention {paths} in trained Transformer architectures is a more general property of stochastic matrices. In particular, we test whether products of randomly sampled stochastic matrices exhibit stronger rank decay in the row-stochastic (Softmax) than in the doubly stochastic (Sinkhorn) case. The resulting plot is shown in \cref{fig:path_random}.}

\begin{figure}[htb]
    \centering
    \includegraphics[width=0.7\textwidth]{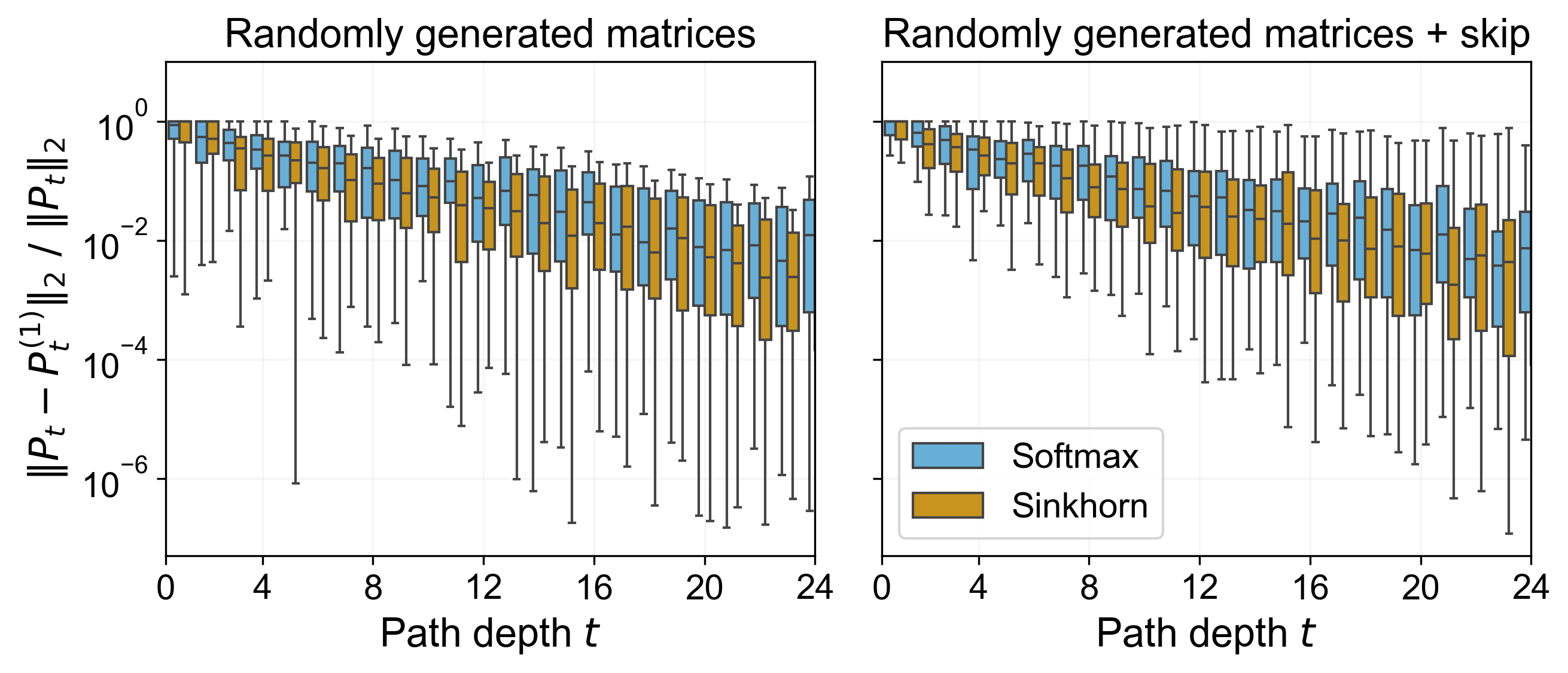}
    \caption{Normalized spectral norm of $\mathrm{res}(P_t)$ in \ref{equ:path_rank} as a function of path depth $t$. The attention matrices in $P_t$ are randomly generated rather than extracted from a trained Transformer. For each depth, results are estimated from 100 sampled attention paths. Lower values indicate rank closer to one.}
    \label{fig:path_random}
\end{figure}

From \cref{fig:path_random}, we observe that when there is no correlation between the stochastic matrices in the product~$P_t$ defined in~\ref{equ:path_product}, Softmax row-stochastic and Sinkhorn doubly stochastic normalizations exhibit essentially the same rank decay as the number of factors (depth) increases. The same behavior is observed when a random subset of factors in the product is skipped to mimic the presence of skip connections. In particular, this effect is simulated by replacing half of the matrices with the identity. 

This suggests that the advantage of Sinkhorn doubly stochastic normalization observed in \cref{fig:path_ag,fig:path_mnist,fig:path_catsdogs} may not be a generic property of arbitrary stochastic matrices. Rather, it may be linked to the fact that attention matrices along the path in a Transformer are correlated with one another, with the attention matrix $P^{\ell+1}_{h_{\ell+1}}$ at layer $\ell+1$ taking as input the output $P^{\ell}_{h_\ell} X W^{\ell}_{h_\ell}$ of the previous layer $\ell$, see \ref{equ:SAN}. A deeper understanding of how Sinkhorn doubly stochastic attention matrices interact, in comparison to Softmax row-stochastic ones, is a challenging problem that we leave for future work.

We also observe that the rank of products of randomly generated stochastic matrices decays more slowly than in the case of attention matrices extracted from a trained Transformer. This is consistent with classical results showing that products of stochastic matrices converge to a rank-one matrix at an exponential rate as the number of factors increases \cite{Anthonisse1977ExponentialCO}, whereas in Transformers the decay is doubly exponential with depth, with a cubic rate. As already mentioned in \cref{theory}, the cubic rate of convergence arises from the fact that the attention matrix $P$ depends on the input $X$ and is applied to it. Since the input to the attention matrix at layer $\ell+1$ is the output $P^{\ell}_{h_\ell} X W^{\ell}_{h_\ell}$ of the previous layer $\ell$, this dependency on $X$ propagates across layers and leads to a cascading effect that amplifies rank decay as depth increases. The attention matrices become progressively closer to rank-one as their inputs become increasingly low-rank. While the intuition that stochastic matrices drive convergence still applies, these interactions result in a significantly faster rank collapse.

\end{appendices}

\end{document}